\begin{document}
%

\newtheorem{theorem}{Theorem}
\newtheorem{lemma}{Lemma}
\newtheorem{claim}{Claim}
\newtheorem{proposition}{Proposition}
\newtheorem{definition}{Definition}
\newtheorem{corollary}{Corollary}
\renewcommand{\phi}{\varphi}
\renewcommand{\epsilon}{\varepsilon}
\newcommand{\<}{\langle}
\renewcommand{\>}{\rangle}
\newenvironment{proof}{\noindent{\sf Proof.}}{\hfill $\boxtimes\hspace{2mm}$\linebreak}
\newcommand{\qed}{\hfill $\boxtimes\hspace{1mm}$}

\renewcommand{\H}{{\sf H}}
\newcommand{\K}{{\sf K}}
\renewcommand{\S}{{\sf S}}

\newsavebox{\diamonddotsavebox}
\sbox{\diamonddotsavebox}{$\Diamond$\hspace{-1.8mm}\raisebox{0.3mm}{$\cdot$}\hspace{1mm}}
\newcommand{\diamonddot}{\usebox{\diamonddotsavebox}}

\title{Time-Bounded Coalition Power in Nondeterministic Transition Systems}

\title{Time-Constrained Coalition Power in \\ Nondeterministic Transition Systems}

\title{Time-Constrained Coalition Power}

\title{A Little Agent Who Thought She Could}

\title{Strategies and Knowledge in Social Transition Systems}

\title{Coalition Power in Epistemic Transition Systems}

\title{Strategic Coalitions with Perfect Recall}

\author{Pavel Naumov \\Department of Computer Science\\Vassar College\\Poughkeepsie, New York 12604\\pnaumov@vassar.edu
\And  Jia Tao \\Department of Computer Science\\Lafayette College\\Easton, Pennsylvania 18042\\taoj@lafayette.edu}




%



\maketitle

\begin{abstract}
The paper proposes a bimodal logic that describes an interplay between distributed knowledge modality and coalition know-how modality. Unlike other similar systems, the one proposed here assumes perfect recall by all agents. Perfect recall is captured in the system by a single axiom. The main technical results are the soundness and the completeness theorems for the proposed logical system.
\end{abstract}

\section{Introduction}

Autonomous agents such as self-driving cars and robotic vacuum cleaners are facing the challenge of navigating without having complete information about the current situation. Such a setting could be formally captured by an {\em epistemic transition system} where an agent uses instructions to transition the system between states without being able to distinguish some of these states. In this paper we study properties of strategies in such systems. An example of such a system is the epistemic transition system $T_1$, depicted in Figure~\ref{intro-example-1 figure}. It has six states named $w_0,w'_0,w_1,w'_1,w_2,w'_2$ and two instructions $0$ and $1$ that an agent $a$ can use to transition the system from one state to another. For instance, if an instruction $0$ is given in state $w_0$, then the system transitions into state $w_1$. The system is called {\em epistemic} because the agent cannot distinguish state $w_i$ from state $w'_i$ for each $i=0,1,2$. The indistinguishability relation is shown in the figure using dashed lines. Atomic proposition $p$ is true only in state $w_2$.

\begin{figure}[ht]
\begin{center}
\scalebox{0.6}{\includegraphics{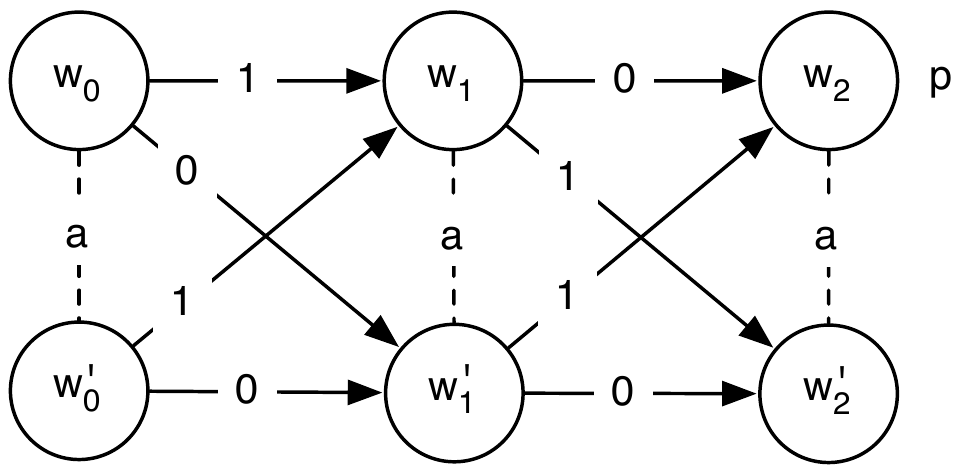}}
\caption{Epistemic Transition System $T_1$.}\label{intro-example-1 figure}
\end{center}
\end{figure}

The logical system that we propose consists of two modalities. The first is the knowledge modality $\K$. Imagine that the system starts in state $w_2$. Since agent $a$ cannot distinguish state $w_2$ from state $w'_2$ where statement $p$ is not satisfied, the agent does not know if $p$ is true or not. We write this as $(w_2)\nVdash \K_a p$. Next, suppose that the system started in state $w_1$ and the agent used instruction $0$ to transition the system into state $w_2$. In this paper we assume that all agents have perfect recall, so in state $w_2$ the agent remembers {\em history} $(w_1,0,w_2)$. However, such a history is indistinguishable from history $(w_1',0,w_2')$ because the agent cannot distinguish state $w_1$ from state $w'_1$ and state $w_2$ from state $w'_2$. Thus, the agent does not know that proposition $p$ is true in the state $w_2$ even with history $(w_1,0,w_2)$. 
We denote this by $(w_1,0,w_2)\nVdash \K_a p$. Finally, assume that the system started in state $w_0$ and the agent first used instruction $1$ to transition it into state $w_1$ and later instruction $0$ to transition it to state $w_2$. Thus, the history of the system is $(w_0,1,w_1,0,w_2)$. The only history that the agent cannot distinguish from this one is history $(w'_0,1,w_1,0,w_2)$. Since both of these histories end in a state where proposition $p$ is satisfied, agent $a$ {\em does} know that proposition $p$ is true in state $w_2$, given history $(w_0,1,w_1,0,w_2)$. We write this as $(w_0,1,w_1,0,w_2)\Vdash \K_a p$.

The other modality that we consider is the strategic power. In system $T_1$, the agent can transition the system from state $w_1$ to state $w_2$ by using instruction 0. Similarly, the agent can transition the system from state $w'_1$ to state $w_2$ by using instruction 1. In other words, given either history $(w_1)$ or history $(w'_1)$ the agent can transition the system to a state in which atomic proposition $p$ is satisfied.
We say that, given either history, agent $a$ has a {\em strategy} to achieve $p$. Histories  $(w_1)$ and $(w'_1)$ are the only histories indistinguishable by agent $a$ from history $(w_1)$. Since she has a strategy to achieve $p$ under all histories indistinguishable from history $(w_1)$, we say that given history $(w_1)$ the agent {\em knows that she has a strategy}. Similarly, given history $(w_1')$, she also knows that she has a strategy.
However, since indistinguishable histories $(w_1)$ and $(w_1')$ require different strategies to achieve $p$, given history $(w_1)$ she {\em does not know what the strategy is}. We say that she does not have a know-how strategy.  We denote this by $(w_1)\nVdash \H_a p$, where $\H$ stands for know-$\H$ow. Of course, it is also true that  $(w'_1)\nVdash \H_a p$.

The situation changes if the transition system starts in state $w_0$ instead of state $w_1$ and transitions to state $w_1$ under instruction $1$. Now the history is $(w_0,1,w_1)$ and the histories that the agent cannot distinguish from this one are history $(w'_0,1,w_1)$ and history $(w_0,1,w_1)$ itself. Given both of these two histories, agent $a$ can achieve $p$ using the same transition $0$. Thus, $(w_0,1,w_1)\Vdash \H_a p$. 

Finally note that there are only two histories: $(w_0)$ and $(w_0')$ indistinguishable from $(w_0)$. Given either history, agent $a$ can achieve $\H_a p$ using instruction $1$. Thus, $(w_0)\Vdash \H_a\H_a p$. That is, given history $(w_0)$ agent $a$ knows how to transition to a state in which formula $\H_a p$ is satisfied.

\subsubsection*{Multiagent Systems}
Like many other autonomous agents, self-driving cars are expected to use vehicle-to-vehicle communication to share traffic information and to coordinate actions~\cite{nhsa14}. Thus, it is natural to consider epistemic transition systems that have more than one agent. An example of such a system $T_2$ is depicted in Figure~\ref{intro-example-2 figure}. 
\begin{figure}[ht]
\begin{center}
\scalebox{0.6}{\includegraphics{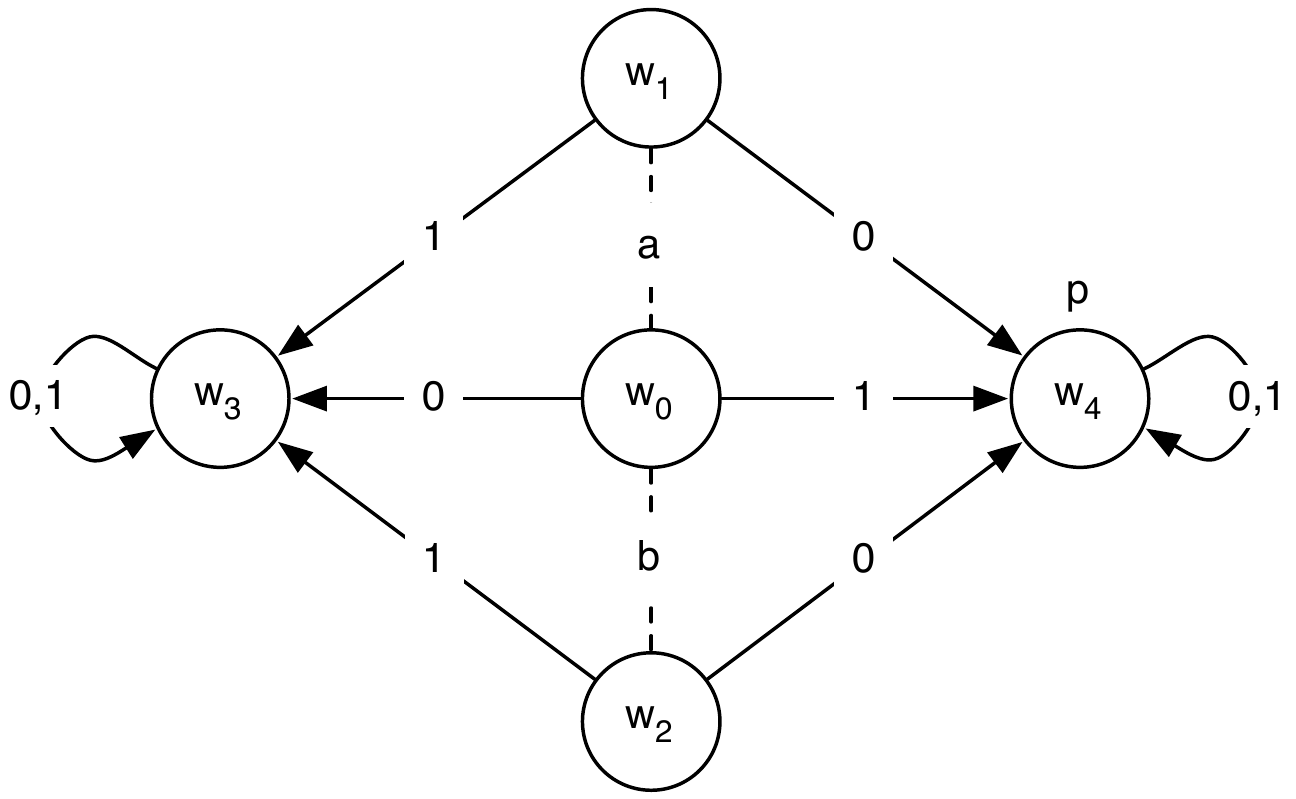}}
\caption{Epistemic Transition System $T_2$.}\label{intro-example-2 figure}
\end{center}
\vspace{0mm}
\end{figure}
This system has five epistemic states: $w_0,w_1,w_2,w_3$, and $w_4$ and three agents: $a$, $b$, and $c$. In each state the agents vote either 0 or 1 and the system transitions into the next state based on the majority vote. For example, since the directed edge from state $w_0$ to state $w_4$ is labelled with 1, if the majority of agents in state $w_0$ votes 1, then the system transitions into state $w_4$. Since coalition $\{a,b\}$ forms a majority, this coalition has a strategy to transition the system from state $w_0$ to state $w_4$ and, thus, to achieve $p$. Note that agent $a$ cannot distinguish state $w_0$ from state $w_1$ and thus agent $a$ does not know what she should vote for to achieve $p$. Similarly, agent $b$ also does not know what she should vote for to achieve $p$ because she cannot distinguish state $w_0$ from state $w_2$. In this paper, we assume that members of a coalition make the decisions based on combined (distributed) knowledge of the whole coalition.
In our example, coalition $\{a,b\}$ can distinguish state $w_0$ from both state $w_1$ and state $w_2$. Thus, given history $(w_0)$ the coalition $\{a,b\}$ knows how to achieve $p$. We denote this by $(w_0)\Vdash \H_{\{a,b\}} p$, or simply as $(w_0)\Vdash \H_{a,b} p$.

\subsubsection*{Universal Principles}

We have discussed a statement being true or false given a certain history. This paper focuses on the logical principles that are true for each history in each epistemic transition system. An example of such a principle is the {\em strategic positive introspection}:
$
\H_C\phi\to\K_C\H_C\phi.
$
This principle says that if a coalition knows how to achieve $\phi$, then the coalition knows that it knows how to achieve $\phi$. Informally, this principle is true because in order for statement $\H_C\phi$ to be satisfied for a given history $h$, coalition $C$ must have a strategy to achieve $\phi$ that works under any history $h'$ indistinguishable from history $h$ by the coalition. Thus, the same strategy must work for any history $h''$ indistinguishable from history $h'$ by the coalition. In other words, it is also true that $h'\Vdash\H_C\phi$. Recall that $h'$ is an arbitrary history indistinguishable from history $h$ by coalition $C$. Hence, $h\Vdash\K_C\H_C\phi$ according to the standard semantics of the epistemic modality $\K_C$. A similar argument can be used to justify the {\em strategic negative introspection}: 
$
\neg\H_C\phi\to\K_C\neg\H_C\phi.
$

Another universal principle is the {\em empty coalition} principle:
$
\K_\varnothing\phi\to\H_\varnothing\phi.
$
Indeed, $\K_\varnothing\phi$ means that statement $\phi$ is true under any history indistinguishable from the given history by an empty coalition. Since an empty coalition cannot distinguish any two histories, the assumption $\K_\varnothing\phi$ means that statement $\phi$ is true under any history. In particular, this statement is true after the next transition no matter how agents vote. Hence, $\H_\varnothing\phi$. 

The epistemic modality $\K_C$ also satisfies axioms of epistemic logic S5 for distributed knowledge. Know-how modality satisfies the {\em unachievability of falsehood} principle: $\neg\H_C\bot$, stating that no coalition can achieve $\bot$. Know-how modality also satisfies 
a form of 
{\em cooperation} principle~\cite{p01illc,p02}:
$$
\H_C(\phi\to\psi)\to(\H_D\phi\to\H_{C\cup D}\psi), \mbox{ where $C\cap D=\varnothing$}.
$$

\subsubsection*{Perfect Recall}

A complete trimodal logical system describing the interplay between distributed knowledge modality $\K_C$, coalition know-how modality $\H_C$, and standard (not know-how) strategic power modality in the {\em imperfect} recall setting was proposed by \cite{nt17tark}. We provide a complete axiomatization of the interplay between modalities $\K_C$ and $\H_C$ in the {\em perfect} recall setting. Surprisingly, the assumption of perfect recall by all agents is captured by a single principle that we call the {\em perfect recall} principle:
$
\H_D\phi\to\H_D\K_C\phi,
$
where $D\subseteq C\neq\varnothing$. This principle says that if a sub-coalition $D\subseteq C$ can achieve $\phi$, then after the vote the whole coalition will know that $\phi$ is true. Informally, this principle is true because coalition $C$ is able to recall how sub-coalition $D$ voted and, thus, will deduce that formula $\phi$ is true after the transition. 
As an empty coalition has no memory even in the perfect recall setting, it is essential for coalition $C$ to be nonempty. However, the sub-coalition $D$ can be empty.

\subsubsection*{Literature Review}  

Non-epistemic logics of coalition power were developed by \cite{p01illc,p02}, who also proved the completeness of the basic logic of coalition power. 
His approach has been widely studied in the literature~\cite{g01tark,vw05ai,b07ijcai,sgvw06aamas,abvs10jal,avw09ai,b14sr,gjt13jaamas}. An alternative logical system for coalition power was proposed by \cite{mn12tocl}. 

\cite{ahk02} introduced Alternating-Time Temporal Logic (ATL) that combines temporal and coalition modalities.
\cite{vw03sl} proposed to combine ATL with epistemic modality to form Alternating-Time Temporal Epistemic Logic. They did not prove the completeness theorem for the proposed logical system. Aminof, Murano, Rubin and Zuleger~\cite{amrz16kr} studied model-checking problems of an extension of ATL with epistemic and ``prompt eventually" modal operators.

\cite{aa12aamas} proposed a complete logical system that combines the coalition power and epistemic modalities. Since their system does not have epistemic requirements on strategies, it does not contain any axioms describing the interplay of these modalities. 
In the extended version, \cite{aa16jlc} added a complete axiomatization of an interplay between single-agent knowledge and know-how modalities.

Know-how strategies were studied before under different names. While \cite{ja07jancl} talked about ``knowledge to identify and execute a strategy", \cite{jv04fm} discussed ``difference between an agent knowing that he has a suitable strategy and knowing the strategy itself". \cite{v01ber} called such strategies ``uniform".  \cite{b08deon} investigated a related notion of ``knowingly doing", while \cite{bht09jancl} studied modality ``know they can do". \cite{w15lori,w17synthese} captured the ``knowing how" as a binary modality in a complete logical system with a single agent and without the knowledge modality.

Coalition know-how strategies for enforcing a condition indefinitely were investigated by \cite{nt17aamas}. 
Such strategies are similar to \cite[p. 80]{p01illc} ``goal maintenance" strategies in ``extended coalition logic".  A similar complete logical system in a {\em single-agent} setting for know-how strategies to {achieve} a goal in multiple steps rather than to {maintain} a goal is developed by \cite{fhlw17ijcai}.   

\cite{nt17tark} also proposed a complete trimodal logical system describing an interplay between distributed knowledge modality $\K_C$, coalition know-how modality $\H_C$, and standard (not know-how) strategic power modality in the {\em imperfect} recall setting. 

In this paper we provide a complete axiomatization of an interplay between modalities $\K_C$ and $\H_C$ in the {\em perfect} recall setting. The main challenge in proving the completeness, compared to~\cite{aa16jlc,fhlw17ijcai,nt17tark,nt17aamas}, is the need to construct not only ``possible worlds", but the entire ``possible histories", see the proof of Lemma~\ref{K child}.

\subsubsection*{Outline}


The rest of the paper is organized as follows. 
First, we introduce the syntax and semantics of our logical system. Next, we list the axioms and give examples of proofs in the system. Then, we prove the soundness and the completeness of this system. 

\section{Syntax and Semantics}\label{syntax and semantics}

Throughout the rest of the paper we assume a fixed set of agents $\mathcal A$. By $X^Y$ we denote the set of all functions from set $Y$ to set $X$, or in other words, the set of all tuples of elements from set $X$ indexed by the elements of set $Y$. If $t\in X^Y$ is such a tuple and $y\in Y$, then by $(t)_y$ we denote the $y$-th component of tuple $t$.

We now proceed to describe the formal syntax and semantics of our logical system starting with the definition of a transition system. Although our introductory examples used voting to decide on the next state of the system, in this paper we investigate universal properties of an arbitrary nondeterministic action aggregation mechanism. 

\begin{definition}\label{transition system}
A tuple $(W,\{\sim_a\}_{a\in \mathcal{A}},V,M,\pi)$ is called an epistemic transition system, if
\begin{enumerate}
    \item $W$ is a set of epistemic states,
    \item $\sim_a$ is an indistinguishability equivalence relation on $W$ for each $a\in\mathcal{A}$,
    \item $V$ is a nonempty set called domain of choices, 
    \item $M\subseteq W\times V^\mathcal{A}\times W$ is an aggregation mechanism,
    \item $\pi$ is a function that maps propositional variables into subsets of $W$.
\end{enumerate}
\end{definition}

For example, in the transition system $T_1$ depicted in Figure~\ref{intro-example-1 figure}, the set of states $W$ is $\{w_0,w_1,w_2,w_0',w_1',w_2'\}$ and relation $\sim_a$ is a transitive reflexive closure of $\{(w_0,w_0')$, $(w_1,w_1'),(w_2,w_2')\}$. 

Informally, an epistemic transition system is {\em regular} if there is at least one next state for each outcome of the vote.
\begin{definition}\label{regular}
An epistemic transition system $(W,\{\sim_a\}_{a\in \mathcal{A}},$ $V,M,\pi)$ is {\em regular} if for each $w\in W$ and each $\mathbf{s}\in V^\mathcal{A}$, there is $w'\in W$  such that $(w,\mathbf{s},w')\in M$.
\end{definition}

A {\em coalition} is a subset of $\mathcal{A}$.
A {\em strategy profile} of coalition $C$ is any tuple in the set $V^C$.

\begin{definition}\label{sim set}
For any states $w_1,w_2\in W$ and any coalition $C$, let $w_1\sim_C w_2$ if $w_1\sim_a w_2$ for each agent $a\in C$.
\end{definition}

\begin{lemma}\label{sim set lemma}
For each coalition $C$, relation $\sim_C$ is an equivalence relation on the set of epistemic states $W$.  \qed
\end{lemma}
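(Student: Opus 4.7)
The plan is to verify the three defining properties of an equivalence relation (reflexivity, symmetry, transitivity) directly from Definition~\ref{sim set}, using as the sole input the fact that each $\sim_a$ is itself an equivalence relation on $W$ by part~2 of Definition~\ref{transition system}. In essence, $\sim_C$ is the intersection $\bigcap_{a\in C}\sim_a$, and an intersection of any family of equivalence relations on the same set is again an equivalence relation; the lemma is a direct instance of this observation.

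Concretely, I would first argue reflexivity: for any $w\in W$ and any $a\in C$, reflexivity of $\sim_a$ gives $w\sim_a w$, so $w\sim_C w$ by Definition~\ref{sim set}. Next, for symmetry, suppose $w_1\sim_C w_2$; then for every $a\in C$ we have $w_1\sim_a w_2$, and symmetry of $\sim_a$ yields $w_2\sim_a w_1$, so $w_2\sim_C w_1$. For transitivity, suppose $w_1\sim_C w_2$ and $w_2\sim_C w_3$; then for each $a\in C$ both $w_1\sim_a w_2$ and $w_2\sim_a w_3$ hold, so transitivity of $\sim_a$ gives $w_1\sim_a w_3$, whence $w_1\sim_C w_3$.

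There is essentially no obstacle; the only subtle point worth flagging is the edge case $C=\varnothing$, which matters later because the paper explicitly works with empty coalitions (e.g.\ in the empty coalition axiom $\K_\varnothing\phi\to\H_\varnothing\phi$). When $C=\varnothing$, the universal quantification over $a\in C$ in Definition~\ref{sim set} is vacuously satisfied, so $w_1\sim_\varnothing w_2$ holds for all $w_1,w_2\in W$; the universal relation is trivially reflexive, symmetric, and transitive, so the three checks above go through without change. This is exactly the semantic counterpart of the intuition, stated in the introduction, that ``an empty coalition cannot distinguish any two histories.''
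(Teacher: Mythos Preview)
Your proof is correct. The paper itself gives no proof at all for this lemma---it is stated with a terminal $\boxtimes$ and left to the reader as an immediate consequence of Definition~\ref{sim set} and the fact that each $\sim_a$ is an equivalence relation; your argument spells out exactly the routine verification the authors omitted, including the vacuous $C=\varnothing$ case.
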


\begin{definition}\label{s eq set}
For all strategy profiles $\mathbf{s}_1$ and $\mathbf{s}_2$ of coalitions $C_1$ and $C_2$ respectively and any coalition $C\subseteq C_1\cap C_2$, let $\mathbf{s}_1=_C\mathbf{s}_2$ if $(\mathbf{s}_1)_a=(\mathbf{s}_2)_a$ for each $a\in C$.
\end{definition}

\begin{lemma}\label{= eq rel lemma}
For any coalition $C$, relation $=_C$ is an equivalence relation on the set of all strategy profiles of coalitions containing coalition $C$. \qed
\end{lemma}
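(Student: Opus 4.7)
The plan is to verify the three defining properties of an equivalence relation—reflexivity, symmetry, and transitivity—directly from Definition~\ref{s eq set}, by reducing each to the corresponding property of ordinary equality on the set of choices $V$. The domain of $=_C$ is the union of sets $V^{C'}$ ranging over all coalitions $C'$ containing $C$; for any strategy profile $\mathbf{s}$ in this domain and any $a\in C\subseteq C'$, the component $(\mathbf{s})_a$ is well-defined, which is all that is needed to make sense of the condition $(\mathbf{s}_1)_a=(\mathbf{s}_2)_a$.

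First I would check reflexivity: given $\mathbf{s}\in V^{C'}$ with $C\subseteq C'$, the trivial identity $(\mathbf{s})_a=(\mathbf{s})_a$ for each $a\in C$ gives $\mathbf{s}=_C\mathbf{s}$. Next, symmetry: if $\mathbf{s}_1=_C\mathbf{s}_2$, then by definition $(\mathbf{s}_1)_a=(\mathbf{s}_2)_a$ for each $a\in C$, and by symmetry of equality on $V$ this is the same as $(\mathbf{s}_2)_a=(\mathbf{s}_1)_a$ for each such $a$, hence $\mathbf{s}_2=_C\mathbf{s}_1$. Finally, transitivity: if $\mathbf{s}_1=_C\mathbf{s}_2$ and $\mathbf{s}_2=_C\mathbf{s}_3$, then for each $a\in C$ we have $(\mathbf{s}_1)_a=(\mathbf{s}_2)_a=(\mathbf{s}_3)_a$, so $(\mathbf{s}_1)_a=(\mathbf{s}_3)_a$ and $\mathbf{s}_1=_C\mathbf{s}_3$.

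There is no real obstacle here; the only substantive point worth flagging is that the relation is being defined across different ambient coalitions $C_1, C_2$, so one must note that the condition in Definition~\ref{s eq set} only refers to components indexed by elements of $C$, which lie in both $C_1$ and $C_2$. Given this remark, the whole argument is a direct transfer of the equivalence-relation properties of $=$ on $V$, entirely analogous in structure to the proof of Lemma~\ref{sim set lemma}.
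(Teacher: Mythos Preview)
Your proposal is correct and matches the paper's approach: the paper gives no proof at all (the lemma is stated with an immediate \qed), treating it as an obvious consequence of the fact that ordinary equality on $V$ is an equivalence relation, which is exactly what you spell out.
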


\begin{definition}\label{history}
A history is an arbitrary sequence $h=(w_0,\mathbf{s}_1,w_1,\mathbf{s}_2,w_2,\dots, \mathbf{s_n},w_n)$ such that $n\ge 0$ and
\begin{enumerate}
    \item $w_i\in W$ for each $i\le n$,
    \item $\mathbf{s_i}\in V^\mathcal{A}$ for each $i\le n$,
    \item $(w_i,\mathbf{s}_{i+1},w_{i+1})\in M$ for each $i< n$.
\end{enumerate}
\end{definition}

In this paper we assume that votes of all agents are private. Thus, an individual agent only knows her own votes and the equivalence classes of the states that the system has been at. This is formally captured in the following definition of indistinguishability of histories by an agent.
\begin{definition}\label{approx histories}
For any history $h=(w_0,\mathbf{s}_1,w_1,\dots, \mathbf{s_n},w_n)$, any history $h'=(w'_0,\mathbf{s'_1},w'_1,\dots, \mathbf{s'_{m}},w'_{m})$, and any agent $a\in \mathcal{A}$, let $h\approx_a h'$ if $n=m$ and
\begin{enumerate}
    \item $w_i\sim_a w'_i$ for each $i\le n$,
    \item $(\mathbf{s_i})_a=(\mathbf{s'_i})_a$ for each $i\le n$.
\end{enumerate}
\end{definition}

\begin{definition}\label{approx set}
For any histories $h_1,h_2$ and any coalition $C$, let $h_1\approx_C h_2$ if $h_1\approx_a h_2$ for each agent $a\in C$.
\end{definition}

\begin{lemma}\label{approx eq rel lemma}
For any coalition $C$, relation $\approx_C$ is an equivalence relation on the set of histories. \qed
\end{lemma}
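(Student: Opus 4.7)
The plan is to reduce the lemma to two simpler facts: (i) each single-agent relation $\approx_a$ is an equivalence relation on histories, and (ii) the intersection of an arbitrary family of equivalence relations on a fixed set is itself an equivalence relation. Then, since Definition~\ref{approx set} says $h_1 \approx_C h_2$ iff $h_1 \approx_a h_2$ for every $a \in C$, the relation $\approx_C$ is literally $\bigcap_{a \in C} \approx_a$, and the result follows. (Part (ii) is standard, so I would state it without further comment.)

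For part (i), fix an agent $a$ and verify the three axioms of an equivalence relation directly from Definition~\ref{approx histories}. Reflexivity is immediate: a history $h=(w_0,\mathbf{s}_1,w_1,\dots,\mathbf{s}_n,w_n)$ has the same length as itself, and $w_i\sim_a w_i$ and $(\mathbf{s}_i)_a=(\mathbf{s}_i)_a$ hold because $\sim_a$ is an equivalence relation and equality is reflexive. Symmetry is equally routine: if $h\approx_a h'$, then the length condition is symmetric, $w_i'\sim_a w_i$ follows from symmetry of $\sim_a$, and $(\mathbf{s}_i')_a=(\mathbf{s}_i)_a$ from symmetry of $=$. Transitivity is the only step that requires attention to the length clause: if $h\approx_a h'$ and $h'\approx_a h''$ have respective lengths $n=m$ and $m=k$, then $n=k$, after which the componentwise conditions follow from transitivity of $\sim_a$ and of $=$.

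There is essentially no obstacle here; the proof is a direct unwinding of the definitions. The only mild subtlety is remembering to handle the discrete length condition $n=m$ explicitly in each of the three clauses, rather than only the two indexed conditions on states and votes. Once part (i) is in hand, invoking the intersection observation gives the lemma in a single line, so I would write the full argument as roughly two short paragraphs.
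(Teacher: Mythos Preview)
Your proposal is correct; the argument by reducing $\approx_C$ to the intersection $\bigcap_{a\in C}\approx_a$ and checking each $\approx_a$ is an equivalence relation directly from Definition~\ref{approx histories} is exactly the natural route. The paper itself omits the proof entirely (the lemma is stated with an inline $\boxtimes$), so your write-up simply fills in the routine details the authors left to the reader.
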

The length $|h|$ of a history $h=(w_0,\mathbf{s}_1,w_1,\dots, \mathbf{s_n},w_n)$ is the value of $n$. By Definition~\ref{approx set}, the empty coalition cannot distinguish any two histories, even of different lengths.  
\begin{lemma}\label{history length lemma}
$|h_1|=|h_2|$ for each histories $h_1$ and $h_2$ such that $h_1\approx_C h_2$ for some nonempty coalition $C$. \qed
\end{lemma}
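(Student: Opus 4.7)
The plan is to unfold the definitions and exploit the fact that single-agent indistinguishability already forces equal length; the nonemptiness of $C$ is exactly what lets us pull a single agent out of the coalition.

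First I would fix arbitrary histories $h_1$ and $h_2$ with $h_1 \approx_C h_2$ for some nonempty $C$. Since $C \neq \varnothing$, I may choose some agent $a \in C$. By Definition~\ref{approx set}, the assumption $h_1 \approx_C h_2$ gives $h_1 \approx_a h_2$. Then by Definition~\ref{approx histories}, the relation $h_1 \approx_a h_2$ requires in its very first clause that the lengths of the two sequences be equal, which is precisely $|h_1| = |h_2|$.

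There is no real obstacle here: the lemma is essentially a sanity check remarking that the obstruction noted just before the lemma statement (that the empty coalition fails to distinguish histories even of different lengths) is the only such obstruction. The whole argument is a one-liner built from Definition~\ref{approx set} followed by Definition~\ref{approx histories}, and the nonemptiness hypothesis is used exactly once, to justify picking the witnessing agent $a$.
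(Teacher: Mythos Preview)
Your proof is correct and is exactly the intended argument: the paper marks this lemma with an immediate \qed\ because it follows directly from picking any $a\in C$ (using $C\neq\varnothing$), applying Definition~\ref{approx set} to get $h_1\approx_a h_2$, and reading off the $n=m$ clause of Definition~\ref{approx histories}. There is nothing to add.
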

For any sequence $x=x_1,\dots,x_n$ and an element $y$, by sequence $x::y$ we mean $x_1,\dots,x_n,y$. If sequence $x$ is nonempty, then by $hd(x)$ we mean element $x_n$.
\begin{lemma}\label{history approx rec lemma}
If $(h_1::\mathbf{s}_1::w_1)\approx_C (h_2::\mathbf{s}_2::w_2)$, then $h_1\approx_C h_2$, $\mathbf{s}_1=_C\mathbf{s}_2$, and $w_1\sim_C w_2$. \qed
\end{lemma}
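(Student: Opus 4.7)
The lemma is essentially a straightforward unpacking of the indistinguishability definitions, with no substantive obstacle; the plan is just to trace through Definitions~\ref{approx set}, \ref{approx histories}, \ref{sim set}, and \ref{s eq set} carefully.

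First, I would fix an arbitrary agent $a \in C$. By Definition~\ref{approx set}, the hypothesis $(h_1::\mathbf{s}_1::w_1)\approx_C (h_2::\mathbf{s}_2::w_2)$ yields $(h_1::\mathbf{s}_1::w_1)\approx_a (h_2::\mathbf{s}_2::w_2)$. I would write $h_1=(u_0,\mathbf{t}_1,u_1,\dots,\mathbf{t}_n,u_n)$ and $h_2=(u'_0,\mathbf{t}'_1,u'_1,\dots,\mathbf{t}'_m,u'_m)$, so that the extended histories have length $n+1$ and $m+1$ respectively. Applying Definition~\ref{approx histories}, the length condition gives $n+1=m+1$, hence $n=m$, and then clauses (1) and (2) of that definition give $u_i\sim_a u'_i$ and $(\mathbf{t}_i)_a=(\mathbf{t}'_i)_a$ for every $i\le n$, together with $w_1\sim_a w_2$ and $(\mathbf{s}_1)_a=(\mathbf{s}_2)_a$ at the final position $i=n+1$.

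Next, since these facts hold for an arbitrary $a\in C$, I would package them using the appropriate definitions: the conditions $u_i\sim_a u'_i$ and $(\mathbf{t}_i)_a=(\mathbf{t}'_i)_a$ for every $a\in C$ and every $i\le n$ are exactly what Definition~\ref{approx histories} requires for $h_1\approx_a h_2$, so by Definition~\ref{approx set} we obtain $h_1\approx_C h_2$. The conditions $w_1\sim_a w_2$ for every $a\in C$ give $w_1\sim_C w_2$ by Definition~\ref{sim set}, and the conditions $(\mathbf{s}_1)_a=(\mathbf{s}_2)_a$ for every $a\in C$ give $\mathbf{s}_1=_C\mathbf{s}_2$ by Definition~\ref{s eq set}.

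The only mild subtlety worth flagging is the indexing: the extended history $h_1::\mathbf{s}_1::w_1$ has its last state at position $n+1$, not $n$, and one must be sure to peel off the final strategy profile and state cleanly rather than confusing them with $\mathbf{t}_n$ and $u_n$. There is no induction, no reliance on regularity, and no interplay between agents needed; the result follows directly from the componentwise form of all four definitions.
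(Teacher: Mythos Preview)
Your proposal is correct; the paper itself gives no proof (the lemma is stated with a bare \qed), so your careful unpacking of Definitions~\ref{approx set}, \ref{approx histories}, \ref{sim set}, and \ref{s eq set} is simply an explicit version of what the authors treat as immediate. The only point worth noting is that when $C=\varnothing$ there is no agent to fix, but then all three conclusions hold vacuously by the same definitions, so your argument structure still goes through.
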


\begin{definition}\label{Phi}
Let $\Phi$ be the language specified as follows
$
\phi := p\;|\;\neg\phi\;|\;\phi\to\phi\;|\;\K_C\phi\;|\;\H_C\phi,
$ where $C\subseteq \mathcal{A}$.
\end{definition}

Boolean constants $\bot$ and $\top$ are defined as usual.

\begin{definition}\label{sat}
For any history $h$ of an epistemic transition system $(W,\{\sim_a\}_{a\in \mathcal{A}},V,M,\pi)$ and any formula $\phi\in \Phi$, let satisfiability relation $h\Vdash\phi$ be defined as follows
\begin{enumerate}
    \item $h\Vdash p$ if $hd(h)\in \pi(p)$ and $p$ is a propositional variable,
    \item $h\Vdash\neg\phi$ if $h\nVdash\phi$,
    \item $h\Vdash\phi\to\psi$ if $h\nVdash\phi$ or $h\Vdash\psi$,
    \item $h\Vdash \K_C\phi$ if $h'\Vdash\phi$ for each history $h'$ s.t. $h\approx_C h'$,
    \item $h\Vdash\H_C\phi$ if there is a strategy profile $\mathbf s\in V^C$ such that for any history $h'::\mathbf{s}'::w'$, if $h\approx_C h'$ and $\mathbf{s}=_C\mathbf{s}'$, then $h'::\mathbf{s}'::w'\Vdash\phi$.
\end{enumerate}
\end{definition}

\section{Axioms}\label{axioms section}

In additional to propositional tautologies in language $\Phi$, our logical system consists of the following axioms: 
\begin{enumerate}
    \item Truth: $\K_C\phi\to\phi$,
    \item Negative Introspection: $\neg\K_C\phi\to\K_C\neg\K_C\phi$,
    \item Distributivity: $\K_C(\phi\to\psi)\to(\K_C\phi\to\K_{C}\psi)$,
    \item Monotonicity: $\K_C\phi\to \K_D\phi$, if $C\subseteq D$,
    \item Strategic Positive Introspection: $\H_C\phi\to\K_C\H_C\phi$,
    \item Cooperation: $\H_C(\phi\to\psi)\to(\H_D\phi\to\H_{C\cup D}\psi)$, where $C\cap D=\varnothing$,
    \item Empty Coalition: $\K_\varnothing\phi\to\H_\varnothing \phi$,
    \item Perfect Recall: $\H_D\phi\to\H_D\K_C\phi$, where $D\subseteq C\neq\varnothing$,
    \item Unachievability of Falsehood: $\neg\H_C\bot$.
\end{enumerate}

We write $\vdash \phi$ if formula $\phi$ is provable from the axioms of our logical system using Necessitation, Strategic Necessitation, and Modus Ponens inference rules:
$$
\dfrac{\phi}{\K_C\phi}
\hspace{10mm}
\dfrac{\phi}{\H_C\phi}
\hspace{10mm}
\dfrac{\phi,\hspace{5mm} \phi\to\psi}{\psi}.
$$
We write $X\vdash\phi$ if formula $\phi$ is provable from the theorems of our logical system and a set of additional axioms $X$ using only Modus Ponens inference rule. 

The next lemma follows from a well-known observation that the Positive Introspection axiom is provable from the other axioms of S5. 

\begin{lemma}\label{positive introspection lemma}
$\vdash \K_C\phi\to\K_C\K_C\phi$.\qed
\end{lemma}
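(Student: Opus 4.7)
The plan is to follow the well-known S5 derivation of the Positive Introspection axiom from the Truth, Negative Introspection, Distributivity axioms together with the Necessitation rule, all of which are already available in our system. No reasoning about histories, strategies, or the $\H_C$ modality will be needed; the argument is purely within the epistemic fragment.

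First I would use the Truth axiom applied to the formula $\neg\K_C\phi$, obtaining $\K_C\neg\K_C\phi\to\neg\K_C\phi$, and take its propositional contrapositive to get
\[
\K_C\phi\to\neg\K_C\neg\K_C\phi. \tag{$\ast$}
\]
Next I would apply Negative Introspection, but to the formula $\neg\K_C\phi$ in place of $\phi$, which yields
\[
\neg\K_C\neg\K_C\phi\to\K_C\neg\K_C\neg\K_C\phi. \tag{$\ast\ast$}
\]

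The third ingredient is obtained by starting from Negative Introspection for $\phi$ itself, $\neg\K_C\phi\to\K_C\neg\K_C\phi$, taking its contrapositive $\neg\K_C\neg\K_C\phi\to\K_C\phi$, applying the Necessitation rule to prefix a $\K_C$, and then invoking Distributivity to push the $\K_C$ inside the implication. This produces
\[
\K_C\neg\K_C\neg\K_C\phi\to\K_C\K_C\phi. \tag{$\ast\ast\ast$}
\]
Chaining $(\ast)$, $(\ast\ast)$, $(\ast\ast\ast)$ by propositional reasoning gives $\K_C\phi\to\K_C\K_C\phi$, as required.

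I do not expect any genuine obstacle here; the only things to be careful about are (i) using Negative Introspection as an axiom \emph{scheme}, so that it can be instantiated at $\neg\K_C\phi$ rather than only at $\phi$, and (ii) keeping the coalition subscript $C$ uniform throughout, since the axioms of our system are stated for distributed knowledge of an arbitrary fixed coalition and the derivation never needs to mix different coalitions.
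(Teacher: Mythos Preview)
Your proposal is correct and follows essentially the same S5 derivation as the paper: both obtain $\K_C\neg\K_C\neg\K_C\phi\to\K_C\K_C\phi$ via Negative Introspection, contraposition, Necessitation, and Distributivity, and both obtain $\K_C\phi\to\K_C\neg\K_C\neg\K_C\phi$ by chaining the contrapositive of Truth with an instance of Negative Introspection at $\neg\K_C\phi$. The only difference is the order in which the three pieces are presented.
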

\begin{proof}
Formula $\neg\K_C\phi\to\K_C\neg\K_C\phi$ is an instance of the Negative Introspection axiom. Thus, $\vdash \neg\K_C\neg\K_C\phi\to \K_C\phi$ by the law of contrapositive in the propositional logic. Hence,
$\vdash \K_C(\neg\K_C\neg\K_C\phi\to \K_C\phi)$ by the Necessitation inference rule. Thus, by  the Distributivity axiom and the Modus Ponens inference rule, 
\begin{equation}\label{pos intro eq 2}
   \vdash \K_C\neg\K_C\neg\K_C\phi\to \K_C\K_C\phi.
\end{equation}

At the same time, $\K_C\neg\K_C\phi\to\neg\K_C\phi$ is an instance of the Truth axiom. Thus, $\vdash \K_C\phi\to\neg\K_C\neg\K_C\phi$ by contraposition. Hence, taking into account the following instance of  the Negative Introspection axiom $\neg\K_C\neg\K_C\phi\to\K_C\neg\K_C\neg\K_C\phi$,
one can conclude that $\vdash \K_C\phi\to\K_C\neg\K_C\neg\K_C\phi$. The latter, together with statement~(\ref{pos intro eq 2}), implies the statement of the lemma by the laws of propositional reasoning.
\end{proof}








\section{Derivation Examples}\label{examples section}

This section contains examples of formal proofs in our logical system. The results obtained here are used in the proof of completeness. The proof of Lemma~\ref{strategic negative introspection lemma} is based on the one proposed to us by Natasha Ale\-chi\-na.


\begin{lemma}[Alechina]\label{strategic negative introspection lemma}
$\vdash \neg\H_C\phi\to\K_C\neg\H_C\phi$.
\end{lemma}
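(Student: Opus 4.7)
The plan is to reduce strategic negative introspection to the ordinary Negative Introspection axiom, via the observation that $\H_C\phi$ and $\K_C\H_C\phi$ are provably equivalent. The Strategic Positive Introspection axiom gives one direction, $\H_C\phi\to\K_C\H_C\phi$, and the Truth axiom (instantiated at $\H_C\phi$) gives the other, $\K_C\H_C\phi\to\H_C\phi$. Contraposing these yields both $\neg\H_C\phi\to\neg\K_C\H_C\phi$ and $\neg\K_C\H_C\phi\to\neg\H_C\phi$.

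Next I would apply the Negative Introspection axiom to the formula $\H_C\phi$ to obtain $\neg\K_C\H_C\phi\to\K_C\neg\K_C\H_C\phi$. Combining this with the first contrapositive above gives
\[
\neg\H_C\phi\to\K_C\neg\K_C\H_C\phi.
\]
It remains to replace the inner $\neg\K_C\H_C\phi$ by $\neg\H_C\phi$ under the $\K_C$. For this I would take the second contrapositive, $\neg\K_C\H_C\phi\to\neg\H_C\phi$, apply the Necessitation rule to obtain $\K_C(\neg\K_C\H_C\phi\to\neg\H_C\phi)$, and then use the Distributivity axiom together with Modus Ponens to derive
\[
\K_C\neg\K_C\H_C\phi\to\K_C\neg\H_C\phi.
\]

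Chaining this with the previous implication yields $\neg\H_C\phi\to\K_C\neg\H_C\phi$ by propositional reasoning, which is the desired conclusion. I do not anticipate any serious obstacle: each step uses only a single axiom (Truth, Strategic Positive Introspection, Negative Introspection, or Distributivity) together with the Necessitation rule and propositional logic. The only slightly subtle point is recognising that Strategic Positive Introspection plus Truth collapses $\K_C\H_C\phi$ and $\H_C\phi$, which is exactly what lets the ordinary Negative Introspection schema do the work for its strategic counterpart.
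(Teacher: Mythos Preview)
Your proposal is correct and follows essentially the same argument as the paper's proof: both use the contrapositive of Strategic Positive Introspection (necessitated and distributed to get $\K_C\neg\K_C\H_C\phi\to\K_C\neg\H_C\phi$), the Negative Introspection axiom instantiated at $\H_C\phi$, and the contrapositive of Truth, chained by propositional reasoning. The only difference is the order in which the implications are combined.
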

\begin{proof}
By the Positive Strategic Introspection axiom, 
$\vdash \H_C\phi\to\K_C\H_C\phi$. Thus, $\vdash \neg\K_C\H_C\phi\to \neg\H_C\phi$ by the contrapositive. Hence, $\vdash \K_C(\neg\K_C\H_C\phi\to \neg\H_C\phi)$ by the Necessitation inference rule. Then, by the Distributivity axiom and the Modus Ponens inference rule
$\vdash \K_C\neg\K_C\H_C\phi\to \K_C\neg\H_C\phi$. 
Thus, by the Negative Introspection axiom and the laws of propositional reasoning,
$\vdash \neg\K_C\H_C\phi\to \K_C\neg\H_C\phi$. Note that $\neg\H_C\phi\to\neg\K_C\H_C\phi$ is the contrapositive of the Truth axiom. Therefore, by the laws of propositional reasoning,
$\vdash \neg\H_C\phi\to \K_C\neg\H_C\phi$.
\end{proof}



\begin{lemma}\label{subset lemma H}
$\vdash\H_C\phi\to \H_D\phi$, where $C\subseteq D$.
\end{lemma}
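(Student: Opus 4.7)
The plan is to derive this as a direct consequence of the Cooperation axiom, using the identity $\phi\to\phi$ as the implication being "transported" by the extra agents in $D\setminus C$. The key observation is that $C\subseteq D$ lets us write $D=C\cup(D\setminus C)$ as a disjoint union, so that Cooperation is applicable with the two disjoint coalitions $D\setminus C$ and $C$.

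Concretely, I would start from the propositional tautology $\phi\to\phi$ and apply the Strategic Necessitation rule to obtain $\vdash\H_{D\setminus C}(\phi\to\phi)$. Then I would instantiate the Cooperation axiom with the disjoint coalitions $D\setminus C$ and $C$ and with both antecedent and consequent equal to $\phi$, giving
\[
\vdash \H_{D\setminus C}(\phi\to\phi)\to\bigl(\H_C\phi\to\H_{(D\setminus C)\cup C}\phi\bigr).
\]
Since $(D\setminus C)\cup C=D$ because $C\subseteq D$, two applications of Modus Ponens then yield $\vdash\H_C\phi\to\H_D\phi$.

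There is no real obstacle here; the only thing to double-check is that the side condition $C\cap(D\setminus C)=\varnothing$ of the Cooperation axiom is trivially satisfied, and that the case $C=D$ is handled uniformly (in that case $D\setminus C=\varnothing$, and Cooperation still applies with the empty coalition as the first argument). Nothing beyond propositional reasoning, Strategic Necessitation, and the Cooperation axiom is needed.
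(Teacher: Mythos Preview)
Your proposal is correct and matches the paper's proof essentially step for step: both start from the tautology $\phi\to\phi$, apply Strategic Necessitation to obtain $\vdash\H_{D\setminus C}(\phi\to\phi)$, instantiate the Cooperation axiom with the disjoint coalitions $D\setminus C$ and $C$, and conclude by Modus Ponens. Your additional remarks about the side condition and the $C=D$ case are accurate and make the argument slightly more explicit than the paper's version.
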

\begin{proof}
Note that $\phi\to\phi$ is a propositional tautology. Thus, $\vdash\phi\to\phi$. Hence, $\vdash\H_{D\setminus C}(\phi\to\phi)$ by the Strategic Necessitation inference rule. At the same time, by the Cooperation axiom,
$
\vdash\H_{D\setminus C}(\phi\to\phi)\to(\H_C\phi\to\H_D\phi)
$
due to the assumption $C\subseteq D$.  Therefore, $\vdash\H_C\phi\to\H_D\phi$ by the Modus Ponens inference rule.
\end{proof}

\begin{lemma}\label{superdistributivity lemma} If $\phi_1,\dots,\phi_n\vdash\psi$, then
\begin{enumerate}
    \item $\K_C\phi_1,\dots,\K_C\phi_n\vdash\K_C\psi$,
    \item $\H_{C_1}\phi_1,\dots,\H_{C_n}\phi_n\vdash\H_{\bigcup_{i=1}^nC_i}\psi$, where sets $C_1,\dots,C_n$ are pairwise disjoint.
\end{enumerate}
\end{lemma}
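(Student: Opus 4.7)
The plan is to handle both parts uniformly by first compressing the hypothesis $\phi_1,\dots,\phi_n\vdash\psi$ into a single theorem via iterated propositional deduction, giving $\vdash\phi_1\to(\phi_2\to\cdots\to(\phi_n\to\psi))$. Each part then proceeds by pushing a modality inside this implication using the corresponding necessitation rule, and then stripping the implications off one at a time using the appropriate distribution-style axiom together with Modus Ponens against the assumed premises.

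For part (1), I would apply the Necessitation rule to obtain $\vdash\K_C(\phi_1\to\cdots\to(\phi_n\to\psi))$, and then proceed by induction on $n$: at each step the Distributivity axiom $\K_C(\phi_i\to\chi)\to(\K_C\phi_i\to\K_C\chi)$ together with Modus Ponens against the hypothesis $\K_C\phi_i$ reduces $\K_C(\phi_i\to\cdots\to\psi)$ to $\K_C(\phi_{i+1}\to\cdots\to\psi)$, eventually yielding $\K_C\psi$.

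For part (2), the analogous move is to apply Strategic Necessitation with the empty coalition to obtain $\vdash\H_\varnothing(\phi_1\to\cdots\to(\phi_n\to\psi))$, and then iterate the Cooperation axiom. At step $i$ we have $\H_{C_1\cup\cdots\cup C_{i-1}}(\phi_i\to\cdots\to\psi)$; since by pairwise disjointness $(C_1\cup\cdots\cup C_{i-1})\cap C_i=\varnothing$, the Cooperation axiom
$$
\H_{C_1\cup\cdots\cup C_{i-1}}(\phi_i\to\chi)\to\bigl(\H_{C_i}\phi_i\to\H_{C_1\cup\cdots\cup C_i}\chi\bigr)
$$
is applicable, and Modus Ponens against the premise $\H_{C_i}\phi_i$ advances the induction. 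After $n$ steps we arrive at $\H_{\bigcup_{i=1}^n C_i}\psi$. The base case $n=0$ collapses to the corresponding necessitation rule applied to $\vdash\psi$ (with the empty coalition for part~(2), matching $\bigcup_{i=1}^0 C_i=\varnothing$).

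No step is really hard; the only thing to watch is the disjointness side condition on Cooperation, but it is automatic from the pairwise-disjointness hypothesis on $C_1,\dots,C_n$, which is exactly why that hypothesis appears in the statement.
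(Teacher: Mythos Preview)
Your proposal is correct and follows essentially the same approach as the paper's own proof: apply the deduction theorem to obtain $\vdash\phi_1\to(\cdots\to(\phi_n\to\psi))$, apply the relevant necessitation rule (Necessitation for part~(1), Strategic Necessitation with the empty coalition for part~(2)), and then iteratively peel off implications using Distributivity or Cooperation together with Modus Ponens. The paper presents part~(2) first and remarks that part~(1) is analogous, but the content is the same as yours.
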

\begin{proof}
To prove the second statement, apply deduction lemma for propositional logic $n$ time. Then, we have $\vdash\phi_1\to(\dots\to(\phi_n\to\psi))$. Thus, by the Strategic Necessitation inference rule, $\vdash\H_\varnothing(\phi_1\to(\dots\to(\phi_n\to\psi)))$ Hence, $\vdash\H_{C_1}\phi_1\to\H_{C_1}(\phi_2\dots\to(\phi_n\to\psi))$ by the Cooperation axiom and the Modus Ponens inference rule. Then, $\H_{C_1}\phi_1\vdash\H_{C_1}(\phi_2\dots\to(\phi_n\to\psi))$ by the Modus Ponens inference rule. Thus, again by the Cooperation axiom and the Modus Ponens inference rule we have $\H_{C_1}\phi_1\vdash\H_{C_2}\phi_2\to \H_{C_1\cup C_2}(\phi_3\dots\to(\phi_n\to\psi))$.
Therefore, by repeating the last two steps $n-2$ times, $\H_{C_1}\phi_1,\dots,\H_{C_n}\phi_n\vdash\H_{\bigcup_{i=1}^nC_i}\psi$. The proof of the first statement is similar, but it uses the Distributivity axiom instead of the Cooperation axiom. 
\end{proof}

\section{Soundness}

In this section we prove the following soundness theorem for our logical system.

\begin{theorem}\label{soundness theorem}
If $\vdash\phi$, then $h\Vdash\phi$ for each history of each regular epistemic transition system.
\end{theorem}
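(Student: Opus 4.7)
The plan is to proceed by induction on the length of the derivation of $\phi$. The Modus Ponens step is immediate, and the two necessitation rules are easy: validity of $\phi$ on every history of every regular system immediately implies validity of $\K_C\phi$ (from the semantics of $\K_C$) and of $\H_C\phi$ (where any strategy serves as a witness, since every continuation history satisfies the already-valid $\phi$). The substantive work is to verify that each axiom is true at every history of every regular epistemic transition system.

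The four S5-style axioms (Truth, Negative Introspection, Distributivity, Monotonicity) will follow by the textbook arguments, using that $\approx_C$ is an equivalence relation (Lemma~\ref{approx eq rel lemma}) that refines as $C$ grows (Definition~\ref{approx set}). Strategic Positive Introspection is transparent: a witness strategy for $\H_C\phi$ at $h$ also witnesses $\H_C\phi$ at any $h'\approx_C h$, by transitivity of $\approx_C$. For Cooperation, I would concatenate the witness strategies from $V^C$ and $V^D$, where the disjointness $C\cap D=\varnothing$ is exactly what makes the concatenation a well-defined element of $V^{C\cup D}$. Empty Coalition reduces to two trivialities, that $\approx_\varnothing$ identifies all histories and $V^\varnothing$ is a singleton, so both $\K_\varnothing\phi$ and $\H_\varnothing\phi$ collapse to $\phi$ holding on the relevant histories. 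Unachievability of Falsehood is the one place where regularity (Definition~\ref{regular}) enters: from a hypothetical witness $\mathbf{s}\in V^C$ for $\H_C\bot$ at $h$, extend $\mathbf{s}$ arbitrarily to a full profile $\mathbf{s}'\in V^\mathcal{A}$ and use regularity to produce some $(hd(h),\mathbf{s}',w')\in M$, yielding a legal extension $h::\mathbf{s}'::w'$ that would have to satisfy $\bot$.

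The main obstacle is the Perfect Recall axiom $\H_D\phi\to\H_D\K_C\phi$ with $D\subseteq C\neq\varnothing$. I would show that the very same witness $\mathbf{s}\in V^D$ for $\H_D\phi$ at $h$ also witnesses $\H_D\K_C\phi$. Given any legal extension $h_1::\mathbf{s}_1::w_1$ with $h\approx_D h_1$ and $\mathbf{s}=_D\mathbf{s}_1$, and any $h_2$ with $h_1::\mathbf{s}_1::w_1\approx_C h_2$, the nonemptiness of $C$ together with Lemma~\ref{history length lemma} forces $h_2$ to have the form $h_2'::\mathbf{s}_2::w_2$; then Lemma~\ref{history approx rec lemma} extracts $h_1\approx_C h_2'$ and $\mathbf{s}_1=_C\mathbf{s}_2$. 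Since $D\subseteq C$, these restrict to the $D$-versions, and two uses of transitivity give $h\approx_D h_2'$ and $\mathbf{s}=_D\mathbf{s}_2$, which is precisely what is needed to feed $h_2'::\mathbf{s}_2::w_2$ into the witness property for $\H_D\phi$, yielding $h_2\Vdash\phi$ and hence $h_1::\mathbf{s}_1::w_1\Vdash\K_C\phi$. The two side conditions do exactly the work expected of them: $C\neq\varnothing$ enables the length lemma to decompose $h_2$, and $D\subseteq C$ lets coalition-$C$ indistinguishability descend to coalition-$D$ indistinguishability so that transitivity can close the loop.
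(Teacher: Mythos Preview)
Your proposal is correct and follows essentially the same approach as the paper: the paper likewise dismisses the S5 axioms as standard and proves the soundness of each of the remaining five axioms (Strategic Positive Introspection, Cooperation, Empty Coalition, Perfect Recall, Unachievability of Falsehood) as a separate lemma, using precisely the arguments you outline. In particular, your handling of Perfect Recall --- using Lemma~\ref{history length lemma} (which needs $C\neq\varnothing$) to decompose $h_2$, then Lemma~\ref{history approx rec lemma} to extract the componentwise relations, then $D\subseteq C$ and transitivity to close the loop --- matches the paper's proof step for step.
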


The proof of the soundness of axioms of epistemic logic S5 with distributed knowledge is standard. Below we prove the soundness of each remaining axiom as a separate lemma.

\begin{lemma}
If $h\Vdash \H_C\phi$, then $h\Vdash\K_C\H_C\phi$.
\end{lemma}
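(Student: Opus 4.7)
The plan is to directly unfold the two semantic clauses from Definition~\ref{sat} and exhibit the \emph{same} witnessing strategy profile. Specifically, suppose $h\Vdash\H_C\phi$. By clause~5 of Definition~\ref{sat}, there exists a strategy profile $\mathbf{s}\in V^C$ with the property that $h'::\mathbf{s}'::w'\Vdash\phi$ whenever $h\approx_C h'$ and $\mathbf{s}=_C\mathbf{s}'$. To establish $h\Vdash\K_C\H_C\phi$, by clause~4 I would fix an arbitrary history $h''$ with $h\approx_C h''$ and aim to show $h''\Vdash\H_C\phi$.

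To prove $h''\Vdash\H_C\phi$, I would reuse the same strategy profile $\mathbf{s}$. That is, I would show that for every history of the form $g::\mathbf{t}::v$ with $h''\approx_C g$ and $\mathbf{s}=_C\mathbf{t}$, the formula $\phi$ holds at $g::\mathbf{t}::v$. The key step is to chain the two indistinguishabilities: from $h\approx_C h''$ and $h''\approx_C g$, the symmetry and transitivity parts of Lemma~\ref{approx eq rel lemma} give $h\approx_C g$. Combined with $\mathbf{s}=_C\mathbf{t}$, the defining property of $\mathbf{s}$ at $h$ yields $g::\mathbf{t}::v\Vdash\phi$, as required.

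There is no real obstacle here; the argument is essentially a transitivity chase. The only point requiring a little care is that the witness $\mathbf{s}$ guaranteed at $h$ is allowed to be reused verbatim at any $C$-equivalent history $h''$ — and this is exactly what makes Strategic Positive Introspection sound: the semantic quantifier ``$\exists\mathbf{s}\in V^C$'' in the definition of $\H_C$ depends only on the $\approx_C$-equivalence class of the history, since the condition $h\approx_C h'$ is itself invariant under replacing $h$ by any $h''\approx_C h$ thanks to Lemma~\ref{approx eq rel lemma}.
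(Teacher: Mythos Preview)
Your proposal is correct and essentially identical to the paper's own proof: both unfold Definition~\ref{sat}, reuse the witnessing strategy profile at the $\approx_C$-equivalent history, and invoke Lemma~\ref{approx eq rel lemma} to chain $h\approx_C h''$ and $h''\approx_C g$ into $h\approx_C g$. The only differences are notational.
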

\begin{proof}
Due to Definition~\ref{sat}, assumption $h\Vdash \H_C\phi$ implies that there is a strategy profile $\mathbf{s}_0$ of coalition $C$ such that for any history $h'::\mathbf{s}'::w'$, if $h\approx_C h'$ and $\mathbf{s}_0=_C\mathbf{s}'$, then $h'::\mathbf{s}'::w'\Vdash\phi$.

Consider any history $h''$ such that $h\approx_C h''$. By Definition~\ref{sat}, it suffices to show that $h''\Vdash \H_C\phi$. Furthermore, by the same Definition~\ref{sat}, it suffices to prove that for any history $h'''::\mathbf{s}'''::w'''$, if $h''\approx_C h'''$ and $\mathbf{s}_0=_C\mathbf{s}'''$, then $h'''::\mathbf{s}'''::w'''\Vdash\phi$.

Suppose that $h''\approx_C h'''$ and $\mathbf{s}_0=_C\mathbf{s}'''$.
By Lemma~\ref{approx eq rel lemma}, statements $h\approx_C h''$ and $h''\approx_C h'''$ imply that $h\approx_C h'''$. Therefore, $h'''::\mathbf{s}'''::w'''\Vdash\phi$ by the choice of $\mathbf{s}_0$.
\end{proof}
\begin{lemma}
If $h\Vdash\H_C(\phi\to\psi)$, $h\Vdash\H_D\phi$, and $C\cap D=\varnothing$, then $h\Vdash\H_C\psi$.
\end{lemma}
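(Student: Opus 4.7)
The plan is to unpack the two strategic-power hypotheses into explicit witnessing strategy profiles, glue them together using the disjointness $C\cap D=\varnothing$, and then verify the semantic clause of the conclusion directly. Concretely, clause 5 of Definition~\ref{sat} applied to $h\Vdash\H_C(\phi\to\psi)$ yields some $\mathbf{s}_1\in V^C$ such that every continuation $h'::\mathbf{s}'::w'$ with $h\approx_C h'$ and $\mathbf{s}_1=_C\mathbf{s}'$ satisfies $\phi\to\psi$; similarly, $h\Vdash\H_D\phi$ yields $\mathbf{s}_2\in V^D$ with the analogous property for $\phi$. Because $C$ and $D$ are disjoint, the tuples $\mathbf{s}_1$ and $\mathbf{s}_2$ are indexed by disjoint sets of agents and may be aggregated without conflict.

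I would take this aggregated profile to play the role of the witness required by clause 5 of Definition~\ref{sat} for the conclusion $h\Vdash\H_C\psi$. Then I would fix an arbitrary continuation $h'::\mathbf{s}'::w'$ permitted by that semantic clause and aim to show that the same continuation falls simultaneously under the universal conditions packaged in the two premises. From the first premise one reads off $h'::\mathbf{s}'::w'\Vdash\phi\to\psi$, and from the second $h'::\mathbf{s}'::w'\Vdash\phi$; clause 3 of Definition~\ref{sat} then gives $h'::\mathbf{s}'::w'\Vdash\psi$, as desired.

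The bookkeeping for the indistinguishability side of both premises is routine: Lemma~\ref{approx eq rel lemma} ensures $\approx_C$ and $\approx_D$ are equivalence relations compatible with restricting from larger coalitions, and Definition~\ref{s eq set} gives the corresponding behavior for strategy agreement, so the constraints $h\approx_C h'$ plus $\mathbf{s}_1=_C\mathbf{s}'$ on the one hand and $h\approx_D h'$ plus $\mathbf{s}_2=_D\mathbf{s}'$ on the other are straightforward to read off the aggregated data.

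The delicate step, and where the disjointness $C\cap D=\varnothing$ is essential, is verifying that an arbitrary continuation admitted by the conclusion's semantic clause is genuinely subject to \emph{both} premises at once: the first premise imposes obligations only on the $C$-components of $\mathbf{s}'$ and the $C$-indistinguishability of $h'$, while the second imposes analogous obligations on $D$. Disjointness is exactly what prevents these two sets of obligations from colliding when we build the common witness and then dispatch the two universal quantifiers in parallel. I expect this alignment to be the main obstacle, and it is where the proof must be carried out carefully rather than by a one-line manipulation.
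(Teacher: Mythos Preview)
Your approach is exactly the paper's: extract witnessing profiles $\mathbf{s}_1\in V^C$ and $\mathbf{s}_2\in V^D$, glue them into a single profile on $C\cup D$ using disjointness, and verify that any continuation compatible with the glued profile falls under both hypotheses simultaneously, so that $\phi\to\psi$ and $\phi$ combine to give $\psi$.

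One point you should make explicit, however, is that the lemma as stated contains a typo inherited from the paper: the conclusion should be $h\Vdash\H_{C\cup D}\psi$, matching the Cooperation axiom, not $h\Vdash\H_C\psi$. The paper's own proof silently proves the $C\cup D$ version (it builds a witness $\mathbf{s}\in V^{C\cup D}$ and checks the clause for $\approx_{C\cup D}$ and $=_{C\cup D}$), and so does yours. Your ``aggregated profile'' lives in $V^{C\cup D}$, not $V^C$, so it cannot serve as the witness required by clause~5 for $\H_C\psi$; and more importantly, if the conclusion really were $\H_C\psi$, the continuation you fix would only satisfy $h\approx_C h'$ and $\mathbf{s}=_C\mathbf{s}'$, from which you \emph{cannot} read off $h\approx_D h'$ or $\mathbf{s}_2=_D\mathbf{s}'$, and the second premise would not apply. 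So your bookkeeping paragraph is correct only under the intended conclusion $\H_{C\cup D}\psi$; make that correction explicit and the argument goes through exactly as in the paper.
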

\begin{proof}
By Definition~\ref{sat}, assumption $h\Vdash \H_C(\phi\to\psi)$ implies that there is a strategy profile $\mathbf{s}_1$ of coalition $C$ such that for any history $h'::\mathbf{s}'::w'$, if $h\approx_C h'$ and $\mathbf{s}_1=_C\mathbf{s}'$, then $h'::\mathbf{s}'::w'\Vdash\phi\to\psi$.

Similarly, assumption $h\Vdash \H_D\phi$ implies that there is a profile $\mathbf{s}_2$ of coalition $D$ such that for any history $h'::\mathbf{s}'::w'$, if $h\approx_D h'$ and $\mathbf{s}_2=_D\mathbf{s}'$, then $h'::\mathbf{s}'::w'\Vdash\phi$.

Consider strategy profile $\mathbf{s}$ of coalition $C\cup D$ such that
$$
(\mathbf{s})_a=
\begin{cases}
(\mathbf{s}_1)_a, & \mbox{ if $a\in C$},\\
(\mathbf{s}_2)_a, & \mbox{ if $a\in D$}.
\end{cases}
$$
Strategy profile $\mathbf{s}$ is well-defined because coalitions $C$ and $D$ are disjoint. By Definition~\ref{sat}, it suffices to show that for any history $h'::\mathbf{s}'::w'$, if $h\approx_{C\cup D} h'$ and $\mathbf{s}=_{C\cup D}\mathbf{s}'$, then $h'::\mathbf{s}'::w'\Vdash\psi$. 

Suppose that  $h\approx_{C\cup D} h'$ and $\mathbf{s}=_{C\cup D}\mathbf{s}'$. Then, $h\approx_C h'$ and $\mathbf{s}_1=_{C}\mathbf{s}=_{C\cup D}\mathbf{s}'$. Hence, $h'::\mathbf{s}'::w'\Vdash\phi\to\psi$ by the choice of strategy $\mathbf{s}_1$. Similarly, $h'::\mathbf{s}'::w'\Vdash\phi$. Therefore, $h'::\mathbf{s}'::w'\Vdash\psi$ by Definition~\ref{sat}.
\end{proof}
\begin{lemma}
If $h\Vdash\K_\varnothing\phi$, then $h\Vdash\H_\varnothing\phi$.
\end{lemma}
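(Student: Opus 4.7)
The plan is a direct unpacking of the definitions, exploiting the fact that the empty coalition distinguishes nothing and has a unique (empty) strategy profile.

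First, I would unpack the hypothesis $h\Vdash\K_\varnothing\phi$ via item 4 of Definition~\ref{sat}: this asserts $h'\Vdash\phi$ for every history $h'$ with $h\approx_\varnothing h'$. Next, I would observe, via Definition~\ref{approx set}, that the condition $h\approx_\varnothing h'$ (a universal quantification over the empty index set) is vacuous, so $h\approx_\varnothing h'$ holds for \emph{every} history $h'$. Consequently, the hypothesis gives $h'\Vdash\phi$ for all histories $h'$ whatsoever.

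To establish $h\Vdash\H_\varnothing\phi$, I would apply item 5 of Definition~\ref{sat} and exhibit the unique empty strategy profile $\mathbf{s}\in V^\varnothing$ as a witness. For any candidate history $h'::\mathbf{s}'::w'$, both side-conditions $h\approx_\varnothing h'$ and $\mathbf{s}=_\varnothing\mathbf{s}'$ hold vacuously (the first by Definition~\ref{approx set}, the second by Definition~\ref{s eq set}). So I need only verify $h'::\mathbf{s}'::w'\Vdash\phi$, and this is immediate from the previous paragraph, since $h'::\mathbf{s}'::w'$ is itself a history (note that items 1--3 of Definition~\ref{history} are satisfied by construction of such a sequence).

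There is no real obstacle: the proof is a two-line chase through Definitions~\ref{approx set}, \ref{s eq set}, and~\ref{sat}. The only subtle point worth explicitly flagging is that $\approx_\varnothing$ is the total relation on histories (a consequence of Definition~\ref{approx set}'s universal quantifier over the empty set), which is precisely what collapses $\K_\varnothing\phi$ to ``$\phi$ at every history'' and simultaneously makes the empty-coalition strategy witness $\H_\varnothing\phi$ trivially. Regularity of the transition system plays no role in this particular soundness lemma.
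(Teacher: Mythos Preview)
Your proof is correct and follows essentially the same approach as the paper's own proof: both pick the empty strategy profile $\mathbf{s}\in V^\varnothing$ as witness, observe that $\approx_\varnothing$ is the total relation on histories (so the hypothesis $h\Vdash\K_\varnothing\phi$ yields $\phi$ at every history), and conclude $h'::\mathbf{s}'::w'\Vdash\phi$ for any extension. The paper is slightly terser---it applies $h\approx_\varnothing(h'::\mathbf{s}'::w')$ directly rather than first deriving ``$\phi$ at all histories'' as an intermediate step---but the underlying argument is identical.
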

\begin{proof}
Let $\mathbf{s}\in V^\varnothing$ be the empty tuple.  By Definition~\ref{sat}, it suffices to show that for any history $h'::\mathbf{s}'::w'$ we have $h'::\mathbf{s}'::w'\Vdash\phi$.  Definition~\ref{approx set} implies $h\approx_\varnothing (h'::\mathbf{s}'::w')$.  Thus, $h'::\mathbf{s}'::w'\Vdash\phi$
 by Definition~\ref{sat} and $h\Vdash\K_\varnothing\phi$. 
\end{proof}
\begin{lemma}
If $h\Vdash\H_D\phi$, then $h\Vdash\H_D\K_C\phi$, where $D\subseteq C$ and $C\neq \varnothing$.
\end{lemma}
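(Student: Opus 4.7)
The plan is to use the same strategy profile that witnesses $h\Vdash\H_D\phi$ as the witness for $h\Vdash\H_D\K_C\phi$. By Definition~\ref{sat}, the assumption gives a strategy profile $\mathbf{s}\in V^D$ such that whenever $h\approx_D h'$ and $\mathbf{s}=_D\mathbf{s}'$, then $h'::\mathbf{s}'::w'\Vdash\phi$. I would show that this same $\mathbf{s}$ ensures $h'::\mathbf{s}'::w'\Vdash\K_C\phi$ for any such $h'::\mathbf{s}'::w'$.

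To do this, fix an arbitrary $h'::\mathbf{s}'::w'$ with $h\approx_D h'$ and $\mathbf{s}=_D\mathbf{s}'$, and pick any history $h''$ with $(h'::\mathbf{s}'::w')\approx_C h''$. Here is where the nonemptiness of $C$ becomes crucial: by Lemma~\ref{history length lemma}, $|h''|=|h'::\mathbf{s}'::w'|\ge 1$, so $h''$ can be written in the form $h'''::\mathbf{s}''::w''$. Then Lemma~\ref{history approx rec lemma} yields $h'\approx_C h'''$, $\mathbf{s}'=_C\mathbf{s}''$, and $w'\sim_C w''$.

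Since $D\subseteq C$, these three facts restrict to $h'\approx_D h'''$ and $\mathbf{s}'=_D\mathbf{s}''$. Combining $h\approx_D h'$ with $h'\approx_D h'''$ via the transitivity half of Lemma~\ref{approx eq rel lemma} gives $h\approx_D h'''$, and combining $\mathbf{s}=_D\mathbf{s}'$ with $\mathbf{s}'=_D\mathbf{s}''$ via Lemma~\ref{= eq rel lemma} gives $\mathbf{s}=_D\mathbf{s}''$. Hence, by the defining property of $\mathbf{s}$, we conclude $h'''::\mathbf{s}''::w''\Vdash\phi$, i.e., $h''\Vdash\phi$. This holds for every $h''$ with $(h'::\mathbf{s}'::w')\approx_C h''$, so $h'::\mathbf{s}'::w'\Vdash\K_C\phi$ by Definition~\ref{sat}, and therefore $h\Vdash\H_D\K_C\phi$.

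The only subtle point is the use of $C\neq\varnothing$: without it, $h''$ could have a length different from $h'::\mathbf{s}'::w'$ and in particular could be empty, in which case it could not be decomposed as $h'''::\mathbf{s}''::w''$ to apply the witness property of $\mathbf{s}$. Everything else is transitivity of $\approx$ and $=$ together with the fact that $D\subseteq C$ lets us restrict $C$-indistinguishabilities to $D$-indistinguishabilities.
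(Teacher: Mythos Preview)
Your proof is correct and follows essentially the same approach as the paper's: take the same witness strategy profile $\mathbf{s}$, use $C\neq\varnothing$ together with Lemma~\ref{history length lemma} to decompose the $C$-indistinguishable history, apply Lemma~\ref{history approx rec lemma}, then use $D\subseteq C$ and the transitivity of $\approx_D$ and $=_D$ (Lemmas~\ref{approx eq rel lemma} and~\ref{= eq rel lemma}) to invoke the witness property. The only differences from the paper are notational.
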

\begin{proof}
By Definition~\ref{sat}, assumption $h\Vdash \H_D\phi$ implies that there is a strategy profile $\mathbf{s}$ of coalition $D$ such that for any history $h'::\mathbf{s}'::w'$, if $h\approx_D h'$ and $\mathbf{s}=_D\mathbf{s}'$, then $h'::\mathbf{s}'::w'\Vdash\phi$.

Consider any history $h_1::\mathbf{s}_1::w_1$ such that $h\approx_D h_1$ and $\mathbf{s}=_D\mathbf{s}_1$. By Definition~\ref{sat}, it suffices to prove that $h_1::\mathbf{s}_1::w_1\Vdash\K_C\phi$. Let $h_2$ be any such history that $(h_1::\mathbf{s}_1::w_1)\approx_C h_2$. Again by Definition~\ref{sat}, it suffices to prove that $h_2\Vdash\phi$. 

By Lemma~\ref{history length lemma}, assumptions $(h_1::\mathbf{s}_1::w_1)\approx_C h_2$ and $C\neq \varnothing$ imply that $|h_2|=|h_1::\mathbf{s}_1::w_1|\ge 1$. Thus, $h_2=h'_2::\mathbf{s}'_2::w'_2$ for some history $h'_2$, some complete strategy profile $\mathbf{s}'_2$, and some epistemic state $w'_2$. 

Then, $(h'_2::\mathbf{s}'_2::w'_2)=h_2\approx_C (h_1::\mathbf{s}_1::w_1)$. Hence, 
$h_1\approx_C h'_2$ and $\mathbf{s}_1=_C \mathbf{s}'_2$ by Lemma~\ref{history approx rec lemma}. Hence, $h\approx_D h_1\approx_C h'_2$ and $\mathbf{s}=_D\mathbf{s}_1=_C\mathbf{s}'_2$ by the choice of history $h_1::\mathbf{s}_1::w_1$. Then, $h\approx_D h'_2$ and $\mathbf{s}=_D\mathbf{s}'_2$ by Lemma~\ref{approx eq rel lemma}, Lemma~\ref{= eq rel lemma}, and because $D\subseteq C$. Thus, $h'_2::\mathbf{s}'_2::w'_2\Vdash\phi$ by the choice of strategy profile $\mathbf{s}$. Therefore, $h_2\Vdash\phi$, because $(h'_2::\mathbf{s}'_2::w'_2)=h_2$. 
\end{proof}
%
%
%
\begin{lemma}
$h\nVdash\H_C\bot$ for any history $h$ of any regular epistemic transition system.
\end{lemma}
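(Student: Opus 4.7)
The plan is to proceed by contradiction. Suppose $h \Vdash \H_C \bot$. By clause~5 of Definition~\ref{sat}, there is a strategy profile $\mathbf{s} \in V^C$ such that $h' :: \mathbf{s}' :: w' \Vdash \bot$ whenever $h \approx_C h'$ and $\mathbf{s} =_C \mathbf{s}'$. Since no history can satisfy $\bot$ (by clause~2 of Definition~\ref{sat}), it suffices to exhibit a single history of the form $h :: \mathbf{s}' :: w'$ for which these two conditions hold, using $h' = h$ (which is allowed since $\approx_C$ is reflexive by Lemma~\ref{approx eq rel lemma}).

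To build such a history, I would first extend $\mathbf{s}$ to a full profile $\mathbf{s}' \in V^{\mathcal{A}}$ by choosing, for each agent $a \notin C$, an arbitrary value $(\mathbf{s}')_a \in V$; this choice is possible because $V$ is nonempty by clause~3 of Definition~\ref{transition system}. By construction, $\mathbf{s} =_C \mathbf{s}'$. Next, I would invoke regularity: Definition~\ref{regular} applied to the last state $hd(h)$ and the profile $\mathbf{s}'$ yields a state $w' \in W$ with $(hd(h), \mathbf{s}', w') \in M$. Then $h :: \mathbf{s}' :: w'$ satisfies the three conditions of Definition~\ref{history} and hence is a legitimate history.

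Applying the assumed $\H_C \bot$ clause to this history gives $h :: \mathbf{s}' :: w' \Vdash \bot$, which contradicts clause~2 of Definition~\ref{sat}. There is no real obstacle here; the only subtlety is recognizing why the extension of $\mathbf{s}$ and the successor state $w'$ are guaranteed to exist, which is exactly the content of the nonemptiness of $V$ and of regularity — without regularity the lemma would fail, since the coalition could vacuously ``achieve'' $\bot$ at a state with no outgoing transition under some profile.
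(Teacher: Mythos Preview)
Your proposal is correct and follows essentially the same approach as the paper's proof: assume $h\Vdash\H_C\bot$, extend the witnessing partial profile $\mathbf{s}$ to a full profile using nonemptiness of $V$, invoke regularity at $hd(h)$ to obtain a successor $w'$, and use reflexivity of $\approx_C$ to derive the contradiction $h::\mathbf{s}'::w'\Vdash\bot$. The paper's argument is the same down to the choice of ingredients and the order of steps.
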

\begin{proof}
Suppose $h\Vdash\H_C\bot$. By Definition~\ref{sat}, there is a strategy profile $\mathbf{s}\in V^C$ such that for any history $h'::\mathbf{s}'::w'$, if $h\approx_C h'$ and $\mathbf{s}=_C\mathbf{s}'$, then  $h'::\mathbf{s}'::w'\Vdash\bot$.  

By Definition~\ref{transition system}, set $V$ contains at least one element $v_0$. Let $\mathbf{s}'$ be a complete strategy profile such that
\begin{equation}\label{choice of s'}
(\mathbf{s}')_a=
\begin{cases}
(\mathbf{s})_a,& \mbox{ if $a\in C$},\\
v_0, & \mbox{ otherwise}.
\end{cases}
\end{equation}
By Definition~\ref{regular}, there is an epistemic state $w'\in S$ such that $(hd(h),\mathbf{s}',w')\in M$. Thus, $h::\mathbf{s}'::w'$ is a history by Definition~\ref{history}. Note that $h\approx_C h$ by Lemma~\ref{approx eq rel lemma} and $\mathbf{s}=_C\mathbf{s}'$ due to equation (\ref{choice of s'}). Thus, $h::\mathbf{s}'::w'\Vdash\bot$ by the choice of strategy profile $\mathbf{s}$, which  contradicts Definition~\ref{sat} and the definition of $\bot$.
\end{proof}
This concludes the proof of Theorem~\ref{soundness theorem}.

\section{Completeness}

In the rest of this paper we focus on the completeness theorem for our logical system with respect to the class of regular epistemic transition systems. We start the proof of completeness by fixing a maximal consistent set of formulae $X_0$ and defining a canonical epistemic transition system $ETS(X_0)=(W,\{\sim_a\}_{a\in \mathcal{A}},\Phi,M,\pi)$ using the ``unravelling" technique~\cite{s75slfm}. Note that the domain of choices in the canonical model is the set of all formulae $\Phi$.

\subsubsection*{Canonical Epistemic Transition System}

\begin{definition}\label{canonical worlds}
The set of epistemic states $W$ consists of all sequences $X_0,C_1,X_1,\dots,C_n,X_n$, such that $n\ge 0$ and
\begin{enumerate}
    \item $X_i$ is a maximal consistent subset of $\Phi$ for each $i\ge 1$,
    \item $C_i\subseteq \mathcal{A}$ for each $i\ge 1$,
    \item  $\{\phi\;|\;\K_{C_i}\phi\in X_{i-1}\}\subseteq X_i$, for each $i\ge 1$.
\end{enumerate}
\end{definition}

\begin{definition}\label{canonical sim}
Suppose that $w=X_0,C_1,X_1,\dots,C_n,X_n$ and $w'=X_0,C'_1,X'_1 \dots,C'_m,X'_m$ are epistemic states. For any agent $a\in\mathcal{A}$, let $w\sim_a w'$ if there is a non-negative integer $k\le \min\{n,m\}$ such that
\begin{enumerate}
    \item $X_i=X'_i$ for each $i$ such that $0<i\le k$,
    \item $C_i=C'_i$ for each $i$ such that $0<i\le k$,
    \item $a\in C_i$ for each $i$ such that $k<i\le n$,
    \item $a\in C'_i$ for each $i$ such that $k<i\le m$.
\end{enumerate}
\end{definition}

\begin{lemma}\label{down lemma}
For any epistemic state $X_0,C_1,\dots,C_n,X_n$ and any integer $k\le n$, if $\K_C\phi\in X_n$ and $C\subseteq C_i$ for each integer $i$ such that $k<i\le n$, then $\K_C\phi\in X_k$.
\end{lemma}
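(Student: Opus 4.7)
The plan is to argue by downward induction on the distance $n-k$, so the claim propagates one index at a time from $X_n$ back to $X_k$. The base case $n-k=0$ is immediate because $\K_C\phi\in X_n$ by hypothesis. For the inductive step, it suffices to establish the following single-step claim: for each $i$ with $k<i\le n$, if $\K_C\phi\in X_i$, then $\K_C\phi\in X_{i-1}$. Iterating this from $i=n$ down to $i=k+1$ yields the lemma, and the uniform hypothesis $C\subseteq C_i$ for all $k<i\le n$ guarantees that the single-step claim is applicable at every iteration.

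To prove the single-step claim I argue by contradiction. Assume $\K_C\phi\notin X_{i-1}$. Since $X_{i-1}$ is a maximal consistent subset of $\Phi$, we then have $\neg\K_C\phi\in X_{i-1}$. By the Negative Introspection axiom, $\vdash\neg\K_C\phi\to\K_C\neg\K_C\phi$, so maximal consistency of $X_{i-1}$ forces $\K_C\neg\K_C\phi\in X_{i-1}$. The hypothesis $C\subseteq C_i$ combined with the Monotonicity axiom then gives $\K_{C_i}\neg\K_C\phi\in X_{i-1}$. But item~3 of Definition~\ref{canonical worlds} requires $\{\psi\mid \K_{C_i}\psi\in X_{i-1}\}\subseteq X_i$, so $\neg\K_C\phi\in X_i$, contradicting $\K_C\phi\in X_i$ and the consistency of $X_i$.

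The only conceptual point worth noting is that item~3 of Definition~\ref{canonical worlds} only transports formulae \emph{forward} along an edge $X_{i-1}\to X_i$ labelled by $C_i$; the lemma asks for \emph{backward} transport. The trick is to use Negative Introspection inside $X_{i-1}$ to manufacture the formula $\K_C\neg\K_C\phi$, and then to use Monotonicity to upgrade its coalition from $C$ to $C_i$ so that it matches the edge label, at which point the forward clause of Definition~\ref{canonical worlds} can be run to produce the desired contradiction in $X_i$. No further obstacle arises, and the argument goes through uniformly for every $i\in(k,n]$.
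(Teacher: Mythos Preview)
Your proof is correct and follows essentially the same approach as the paper's: both arguments reduce to the single-step observation that if $\K_C\phi\notin X_{i-1}$ then Negative Introspection, Monotonicity (using $C\subseteq C_i$), and item~3 of Definition~\ref{canonical worlds} force $\neg\K_C\phi\in X_i$. You package this as a downward induction on $n-k$, while the paper phrases it as a maximal-counterexample argument, but the substantive reasoning is identical.
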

\begin{proof}
Suppose $\K_C\phi\notin X_k$ for some $k\le n$. Let $m$ be the maximal such $k$. Note that $m<n$ by the assumption $\K_C\phi\in X_n$ of the lemma. Thus, $m< m+1\le n$.

Assumption $\K_C\phi\notin X_{m}$ implies $\neg\K_C\phi\in X_m$ by the maximality of the set $X_{m}$. Hence, $X_{m}\vdash \K_C\neg\K_C\phi$ by the Negative Introspection axiom. Thus, $X_{m}\vdash \K_{C_{m+1}}\neg\K_C\phi$ by the Monotonicity axiom and the assumption $C\subseteq C_{m+1}$ of the lemma (recall that $m+1\le n$). 
Then, $\K_{C_{m+1}}\neg\K_C\phi\in X_{m}$ due to the maximality of the set $X_{m}$.  Hence, $\neg\K_C\phi\in X_{m+1}$ by Definition~\ref{canonical worlds}.
Thus, $\K_C\phi\notin X_{m+1}$ due to the consistency of the set $X_{m+1}$, which contradicts the choice of $m$.
\end{proof}

\begin{lemma}\label{up lemma}
For any epistemic state $X_0,C_1,\dots,C_n,X_n$ and any integer $k\le n$, if $\K_C\phi\in X_{k}$ and $C\subseteq C_i$ for each integer $i$ such that $k<i\le n$, then $\phi\in X_n$.
\end{lemma}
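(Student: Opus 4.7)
The plan is to induct on $n-k$, pushing the formula $\K_C\phi$ forward along the sequence one step at a time, and then to discharge the knowledge modality at the final state via the Truth axiom.

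For the base case $n=k$, the assumption gives $\K_C\phi\in X_n$. Since $\K_C\phi\to\phi$ is an instance of the Truth axiom, the maximality and consistency of $X_n$ immediately yield $\phi\in X_n$.

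For the inductive step, suppose the statement holds whenever the gap is less than $n-k$, and assume $\K_C\phi\in X_k$ with $C\subseteq C_i$ for every $i$ with $k<i\le n$. In particular $C\subseteq C_{k+1}$. From $\K_C\phi\in X_k$, Lemma~\ref{positive introspection lemma} and maximality give $\K_C\K_C\phi\in X_k$, and then the Monotonicity axiom, together with $C\subseteq C_{k+1}$, yields $\K_{C_{k+1}}\K_C\phi\in X_k$. By clause~3 of Definition~\ref{canonical worlds} this forces $\K_C\phi\in X_{k+1}$. Since $C\subseteq C_i$ still holds for each $i$ with $k+1<i\le n$, the induction hypothesis applied to the shorter gap $n-(k+1)$ gives $\phi\in X_n$, as desired.

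I do not expect any serious obstacle here: the argument is essentially the mirror image of Lemma~\ref{down lemma}, with Positive Introspection (Lemma~\ref{positive introspection lemma}) playing the role that Negative Introspection played there. The only point that requires a little care is making sure the Monotonicity step is legal, which is why one invokes Positive Introspection first to introduce the outer $\K_C$ on which Monotonicity can act, rather than trying to weaken the inner coalition directly.
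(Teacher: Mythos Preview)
Your proposal is correct and matches the paper's own proof essentially step for step: induction on $n-k$, Truth axiom for the base case, and in the inductive step Positive Introspection (Lemma~\ref{positive introspection lemma}) followed by Monotonicity to obtain $\K_{C_{k+1}}\K_C\phi\in X_k$, then Definition~\ref{canonical worlds} to push $\K_C\phi$ into $X_{k+1}$.
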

\begin{proof}
We prove the lemma by induction on the distance between $n$ and $k$. In the base case $n=k$. Then the assumption $\K_C\phi\in X_{n}$ implies $X_n\vdash\phi$ by the Truth axiom. Therefore, $\phi\in X_n$ due to the maximality of set $X_n$.

Suppose that $k<n$. Assumption $\K_C\phi\in X_{k}$ implies $X_k\vdash \K_C\K_C\phi$ by Lemma~\ref{positive introspection lemma}. Thus, $X_k\vdash \K_{C_{k+1}}\K_C\phi$ by the Monotonicity axiom, the condition $k<n$ of the inductive step, and the assumption $C\subseteq C_{k+1}$ of the lemma. Then, $\K_{C_{k+1}}\K_C\phi\in X_k$ by the maximality of set $X_k$. 
Hence, $\K_C\phi\in X_{k+1}$ by Definition~\ref{canonical worlds}. Therefore, $\phi\in X_n$ by the induction hypothesis. 
\end{proof}

\begin{lemma}\label{up-down lemma}
For any epistemic states $w,w'\in W$ such that $w\sim_C w'$, if $\K_C\phi\in hd(w)$, then $\phi\in hd(w')$.
\end{lemma}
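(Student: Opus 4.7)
The plan is to reduce the claim to a two-step application of the previously established Lemma~\ref{down lemma} (pushing $\K_C\phi$ backwards along the sequence $w$) and Lemma~\ref{up lemma} (pushing $\phi$ forwards along $w'$), meeting at a common prefix where the two epistemic states coincide. Write $w=X_0,C_1,X_1,\dots,C_n,X_n$ and $w'=X_0,C'_1,X'_1,\dots,C'_m,X'_m$. The key issue will be to identify a single index $k\le\min\{n,m\}$ such that (a) the prefixes agree, i.e.\ $X_i=X'_i$ and $C_i=C'_i$ for all $0<i\le k$, and (b) $C\subseteq C_i$ for all $k<i\le n$ and $C\subseteq C'_i$ for all $k<i\le m$. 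Conditions (a) and (b) are exactly what the two lemmas need.

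The first case to dispose of is $C=\varnothing$. Here I would simply take $k=0$, so the matching prefix requirement is vacuous and the inclusion $C\subseteq C_i$ holds trivially; Lemma~\ref{down lemma} then gives $\K_\varnothing\phi\in X_0$, and Lemma~\ref{up lemma} applied to $w'$ yields $\phi\in X'_m=hd(w')$.

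For nonempty $C$, I would use the assumption $w\sim_C w'$ to obtain, for each $a\in C$, a witnessing index $k_a\le\min\{n,m\}$ as in Definition~\ref{canonical sim}, and then set $k=\max\{k_a:a\in C\}$. Picking any $a^*\in C$ with $k_{a^*}=k$, the first two clauses of Definition~\ref{canonical sim} applied to $a^*$ deliver $X_i=X'_i$ and $C_i=C'_i$ for all $0<i\le k$. For the inclusion $C\subseteq C_i$ when $k<i\le n$, I would fix an arbitrary $a\in C$: since $k_a\le k<i$, clause~3 of Definition~\ref{canonical sim} for agent $a$ gives $a\in C_i$; the analogous argument using clause~4 yields $C\subseteq C'_i$ for $k<i\le m$.

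With this $k$ in hand, the finish is mechanical. Lemma~\ref{down lemma} applied to $w$ and the hypothesis $\K_C\phi\in X_n$ pushes the formula down to $\K_C\phi\in X_k$. Using $X_k=X'_k$, this gives $\K_C\phi\in X'_k$, and Lemma~\ref{up lemma} applied to $w'$ then promotes the formula to $\phi\in X'_m=hd(w')$. The only real subtlety is the choice of $k$ as the \emph{maximum} of the agent-witnesses $k_a$; taking the minimum would secure the prefix-agreement clause but not the containment clause, so getting the direction right is where I would focus the argument.
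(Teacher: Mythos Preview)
Your proposal is correct and follows the same strategy the paper uses: combine Lemma~\ref{down lemma} and Lemma~\ref{up lemma} via Definition~\ref{canonical sim}, meeting at a common prefix index. The paper's proof is a one-line sketch invoking the tree structure of $W$ (``there is a unique path between any two nodes in a tree''), whereas you spell out explicitly how to locate the meeting index $k$ (as $\max_{a\in C}k_a$, well-defined since all $k_a\le\min\{n,m\}$) and verify the two hypotheses needed; this is exactly the content the paper leaves implicit.
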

\begin{proof}
The statement of the lemma follows from Lemma~\ref{down lemma} and  Lemma~\ref{up lemma} as well as Definition~\ref{canonical sim} because there is a unique path between any two nodes in a tree.
\end{proof}

Next, we specify the aggregation mechanism of the canonical epistemic transition system. Informally, if a coalition has a know-how strategy to achieve $\phi$ and all members of the coalition vote for $\phi$, then $\phi$ must be true after the transition.
\begin{definition}\label{canonical M}
For any states $w,w'\in W$ and any complete strategy profile $\mathbf{s}\in\Phi^\mathcal{A}$, let $(w,\mathbf{s},w')\in M$ if
\begin{eqnarray*}
&\{\phi\,|\,(\H_D\phi\in hd(w))  \wedge \forall a\in D((\mathbf{s})_a=\phi)\}\subseteq hd(w').
\end{eqnarray*}
\end{definition}

\begin{definition}\label{canonical pi}
$\pi(p)=\{w\in W\;|\; p\in hd(w)\}$.
\end{definition}

This concludes the definition of the canonical epistemic transition system $ETS(X_0)=(W,\{\sim_a\}_{a\in \mathcal{A}},\Phi,M,\pi)$. We prove that this system is regular in Lemma~\ref{canonical is regular}.

\subsubsection*{$\K$-child Lemmas}


The following technical results (Lemmas~\ref{state K child lemma}--\ref{K child}) about the knowledge modality $\K$ are used in the proof of completeness. 
\begin{lemma}\label{state K child lemma}
For any epistemic state $w\in W$ if $\neg\K_C\phi\in hd(w)$, then there is an epistemic state $w'\in W$ such that $w\sim_C w'$ and  $\neg\phi\in hd(w')$.
\end{lemma}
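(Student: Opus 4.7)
The plan is to construct $w'$ by extending $w$ one step further, appending the coalition $C$ and a new maximal consistent set that contains $\neg\phi$. Concretely, if $w = X_0, C_1, X_1, \dots, C_n, X_n$, I will take $w' = X_0, C_1, X_1, \dots, C_n, X_n, C, X_{n+1}$ for a suitably chosen $X_{n+1}$.

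The first step is to produce $X_{n+1}$. Consider the set $Y = \{\neg\phi\} \cup \{\psi \mid \K_C\psi \in X_n\}$ and argue it is consistent. If not, there would be $\psi_1, \dots, \psi_k$ with each $\K_C\psi_i \in X_n$ such that $\psi_1, \dots, \psi_k \vdash \phi$. Applying the first part of Lemma~\ref{superdistributivity lemma}, we get $\K_C\psi_1, \dots, \K_C\psi_k \vdash \K_C\phi$, so $X_n \vdash \K_C\phi$, contradicting $\neg\K_C\phi \in X_n$ together with the consistency of $X_n$. Then extend $Y$ to a maximal consistent set $X_{n+1} \subseteq \Phi$ by a standard Lindenbaum construction.

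Next, I verify that $w' \in W$ by checking the three clauses of Definition~\ref{canonical worlds}: $X_{n+1}$ is maximal consistent by construction, $C \subseteq \mathcal{A}$ trivially, and $\{\psi \mid \K_C\psi \in X_n\} \subseteq X_{n+1}$ because $Y \subseteq X_{n+1}$. By construction $\neg\phi \in X_{n+1} = hd(w')$.

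Finally, I check $w \sim_C w'$. Fix any $a \in C$; I claim $w \sim_a w'$ witnessed by $k = n$ in Definition~\ref{canonical sim}. Indeed, clauses (1) and (2) hold because $w$ and $w'$ agree on their first $n$ coalitions and first $n+1$ maximal consistent sets; clause (3) is vacuous since there are no indices $i$ with $n < i \le n$; and clause (4) reduces to requiring $a \in C'_{n+1} = C$, which holds since $a \in C$. Hence $w \sim_C w'$ by Definition~\ref{sim set}. There is no serious obstacle here; the only subtlety is recognizing that Lemma~\ref{superdistributivity lemma}(1) is exactly what licenses the consistency argument, and that appending a single step labelled by $C$ is enough to witness $\sim_a$ simultaneously for every $a \in C$.
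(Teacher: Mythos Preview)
Your proposal is correct and follows essentially the same approach as the paper: both construct $w'=w::C::\hat X$ where $\hat X$ is a maximal consistent extension of $\{\neg\phi\}\cup\{\psi\mid\K_C\psi\in hd(w)\}$, and both use Lemma~\ref{superdistributivity lemma}(1) for the consistency argument. You are simply more explicit than the paper in verifying the clauses of Definitions~\ref{canonical worlds} and~\ref{canonical sim}, which is fine.
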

\begin{proof}
Consider the set
$X=\{\neg\phi\}\cup\{\psi\;|\;\K_C\psi\in hd(w)\}$.
First, we show that set $X$ is consistent. Assume the opposite.
Then, there exist formulae $\K_C\psi_1,\dots,\K_C\psi_n\in hd(w)$ such that
$
\psi_1,\dots,\psi_n\vdash\phi
$.
Thus, 
$\K_C\psi_1,\dots,\K_C\psi_n\vdash\K_C\phi$
by Lemma~\ref{superdistributivity lemma}.
Therefore, $hd(w)\vdash \K_C\phi$ by the choice of formulae $\K_C\psi_1,\dots,\K_C\psi_n$, which contradicts the consistency of the set $hd(w)$ due to the assumption $\neg\K_C\phi\in hd(w)$.

Let $\hat{X}$ be a maximal consistent extension of set $X$ and let $w'$ be sequence $w::C::X$. Note that $w'\in W$ by Definition~\ref{canonical worlds} and the choice of set $X$. Furthermore, $w\sim_C w'$ by Definition~\ref{canonical sim}. To finish the proof, note that $\neg\phi\in X\subseteq \hat{X}=hd(w')$ by the choice of set $X$. 
\end{proof}

History $h$ is a sequence whose last element $hd(h)$ is an epistemic state. Epistemic state $hd(h)$, by Definition~\ref{canonical worlds}, is also a sequence. Expression $hd(hd(h))$ denotes the last element of the sequence $hd(h)$.
\begin{lemma}\label{K child all}
For any history $h$, if $\K_C\phi\in hd(hd(h))$, then $\phi\in hd(hd(h'))$ for each history $h'$ such that $h\approx_C h'$.
\end{lemma}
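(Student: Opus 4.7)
The plan is to reduce this statement directly to Lemma~\ref{up-down lemma}, which says exactly this property but for epistemic states rather than histories. The key observation is that $h\approx_C h'$ implies $hd(h)\sim_C hd(h')$, and then Lemma~\ref{up-down lemma} finishes the job since $hd(hd(h))$ is literally the last maximal consistent set in the epistemic state $hd(h)$.

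First, I would handle the two cases on whether $C$ is empty or not. If $C=\varnothing$, then $hd(h)\sim_\varnothing hd(h')$ holds vacuously by Definition~\ref{sim set}, since there are no agents whose indistinguishability needs to be checked. If $C\neq \varnothing$, then by Lemma~\ref{history length lemma} the histories $h$ and $h'$ have the same length $n$, so $h=(w_0,\mathbf{s}_1,\ldots,\mathbf{s}_n,w_n)$ and $h'=(w'_0,\mathbf{s}'_1,\ldots,\mathbf{s}'_n,w'_n)$. By Definition~\ref{approx set}, for each $a\in C$ we have $h\approx_a h'$, which by Definition~\ref{approx histories} entails $w_n\sim_a w'_n$. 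Hence $hd(h)=w_n\sim_C w'_n=hd(h')$ by Definition~\ref{sim set}.

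Now by Lemma~\ref{up-down lemma} applied to the epistemic states $hd(h)$ and $hd(h')$, the assumption $\K_C\phi\in hd(hd(h))$ yields $\phi\in hd(hd(h'))$, which is the desired conclusion.

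There is no real obstacle here: the lemma is essentially a translation of the corresponding epistemic-state fact (Lemma~\ref{up-down lemma}) to the history level, and it holds because the equivalence relation $\approx_C$ on histories is defined componentwise from the state-level relation $\sim_C$ (Definition~\ref{approx histories}). The only mild subtlety is the empty-coalition case, where histories of different lengths can be $\approx_\varnothing$-related, but that case is trivially handled by the vacuous nature of $\sim_\varnothing$ and the fact that $\K_\varnothing\phi\in hd(hd(h))$ together with Lemma~\ref{up-down lemma} still delivers $\phi\in hd(hd(h'))$ for any $hd(h')$ whatsoever.
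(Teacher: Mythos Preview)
Your proposal is correct and follows essentially the same route as the paper: derive $hd(h)\sim_C hd(h')$ from $h\approx_C h'$ and then invoke Lemma~\ref{up-down lemma}. The paper's proof is terser (it does not separate out the empty-coalition case), but your added care there is harmless and the argument is the same.
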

\begin{proof}
Assumption $h\approx_C h'$ by Definition~\ref{approx histories} implies that $hd(h)\sim_C hd(h')$. Therefore, $\phi\in hd(hd(h'))$ by Lemma~\ref{up-down lemma} and the assumption $\K_C\phi\in hd(hd(h))$.
\end{proof}
\begin{figure}[ht]
\begin{center}
\scalebox{0.6}{\includegraphics{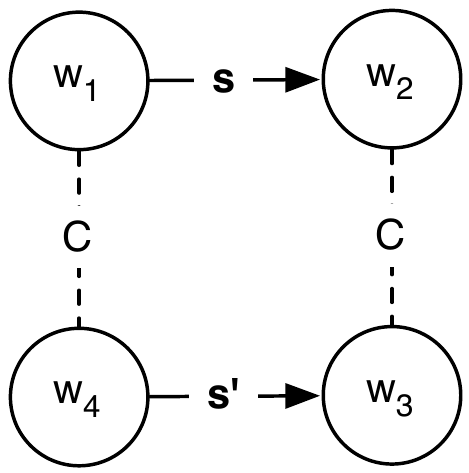}}
\caption{Illustration for Lemma~\ref{square lemma}.}\label{square figure}
\end{center}
\vspace{0mm}
\end{figure}
\begin{lemma}\label{square lemma}
For any nonempty coalition $C$, any states $w_1,w_2,w_3\in W$, and any complete strategy profile $\mathbf{s}$ such that $(w_1,\mathbf{s},w_2)\in M$ and $w_2\sim_C w_3$, see Figure~\ref{square figure}, there is a state $w_4$ and a complete strategy profile  $\mathbf{s}'$ such that $w_1\sim_C w_4$, $(w_4,\mathbf{s}', w_3)\in M$, and $\mathbf{s}=_C\mathbf{s}'$.
\end{lemma}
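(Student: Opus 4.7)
The plan is to take $w_4=w_1$ and build $\mathbf{s}'$ by copying $\mathbf{s}$ on $C$ and voting a fixed tautology outside $C$; concretely, $(\mathbf{s}')_a = (\mathbf{s})_a$ for $a \in C$ and $(\mathbf{s}')_a = \top$ for $a \notin C$. With this choice $w_1 \sim_C w_4$ follows from reflexivity of $\sim_C$ (Lemma~\ref{sim set lemma}) and $\mathbf{s} =_C \mathbf{s}'$ holds by construction, so the entire content of the lemma reduces to the claim $(w_1, \mathbf{s}', w_3) \in M$. By Definition~\ref{canonical M}, this in turn amounts to showing that whenever $\H_D \phi \in hd(w_1)$ and $(\mathbf{s}')_a = \phi$ for every $a \in D$, one has $\phi \in hd(w_3)$.

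First I would dispose of the easy case $D \not\subseteq C$: any $a_0 \in D \setminus C$ then has $(\mathbf{s}')_{a_0} = \top$, which forces $\phi = \top$, and $\top$ belongs to every maximal consistent set. In the substantive case $D \subseteq C$, the $D$-components of $\mathbf{s}$ and $\mathbf{s}'$ coincide, so $(\mathbf{s})_a = \phi$ for every $a \in D$; the given relation $(w_1, \mathbf{s}, w_2) \in M$ then yields $\phi \in hd(w_2)$. To transfer this across $w_2 \sim_C w_3$ using Lemma~\ref{up-down lemma}, it suffices to upgrade $\phi \in hd(w_2)$ to $\K_C \phi \in hd(w_2)$. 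This is precisely where the Perfect Recall axiom is indispensable: from $D \subseteq C \neq \varnothing$ it delivers $\H_D \K_C \phi \in hd(w_1)$, and feeding this derived $\H$-formula back into the aggregation condition at $(w_1, \mathbf{s}, w_2)$ gives $\K_C \phi \in hd(w_2)$; Lemma~\ref{up-down lemma} then completes the chain to $\phi \in hd(w_3)$.

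The main obstacle I anticipate is precisely this last invocation of the $M$-condition: the canonical votes $(\mathbf{s})_a = \phi$ literally spell the formula $\phi$ rather than $\K_C \phi$, so matching them against the derived $\H_D \K_C \phi$ demands a careful unpacking of the interaction between Perfect Recall and the canonical aggregation. The degenerate sub-case $D = \varnothing$ is a reassuring sanity check, since the vote condition is then vacuous and $\H_\varnothing \K_C \phi$ fires in $M$ with no matching required; the nontrivial $D \neq \varnothing$ case is where the real work sits, and once it is settled everything else—reflexivity, the disjoint-coalition case analysis, and the final application of Lemma~\ref{up-down lemma}—is routine bookkeeping in the definitions of $\sim_C$ and $=_C$.
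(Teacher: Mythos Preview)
Your obstacle is not just the main difficulty; with the choice $w_4=w_1$ it is insurmountable. Definition~\ref{canonical M} ties the formula delivered by $\H_D\phi$ rigidly to the \emph{literal} ballots: to extract $\K_C\phi$ from $(w_1,\mathbf{s},w_2)\in M$ via $\H_D\K_C\phi\in hd(w_1)$ you would need every $a\in D$ to have cast the syntactic vote $\K_C\phi$, whereas they cast $\phi$. Perfect Recall supplies the $\H$-formula but cannot rewrite a ballot already cast. Since $hd(w_2)$ is an arbitrary maximal consistent set containing only the forced formulae, it need not contain $\K_C\phi$; one may then pick $w_3\sim_C w_2$ with $\neg\phi\in hd(w_3)$ (Lemma~\ref{state K child lemma}), and your claimed $(w_1,\mathbf{s}',w_3)\in M$ fails outright, because $\H_D\phi\in hd(w_1)$ together with $(\mathbf{s}')_a=\phi$ on $D$ would force $\phi\in hd(w_3)$. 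The $D=\varnothing$ sub-case you flag as reassuring is the \emph{only} nontrivial case that goes through, precisely because the ballot-matching clause is vacuous there.

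The paper does not try to reuse $w_1$. It manufactures a fresh state $w_4=w_1::C::\hat X$, where $\hat X$ is a maximal consistent extension of
\[
\{\phi\mid \K_C\phi\in hd(w_1)\}\ \cup\ \{\neg\H_D\psi\mid \neg\psi\in hd(w_3)\ \wedge\ \forall a\in D\,((\mathbf{s}')_a=\psi)\}.
\]
With $hd(w_4)$ engineered this way, any $\H_D\psi\in hd(w_4)$ with matching $\mathbf{s}'$-votes automatically has $\psi\in hd(w_3)$, so $(w_4,\mathbf{s}',w_3)\in M$ holds by design, and $w_1\sim_C w_4$ comes from the tree structure of canonical states (Definition~\ref{canonical sim}). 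All the substance shifts to proving that this set is consistent, and \emph{that} argument is where Perfect Recall is actually deployed, alongside Lemma~\ref{strategic negative introspection lemma} and the hypothesis $(w_1,\mathbf{s},w_2)\in M$. So the way forward is not to argue harder at $w_1$ but to pass to a tailor-made $\sim_C$-neighbour whose head omits the obstructing $\H$-formulae.
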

\begin{proof}
Let $\mathbf{s}'$ be a complete strategy profile such that
\begin{equation}
\label{s' def}
(\mathbf{s}')_a=
\begin{cases}
(\mathbf{s})_a, & \mbox{ if } a\in C,\\
\bot, & \mbox{ otherwise}.
\end{cases}
\end{equation}
Consider the set of formulae
\begin{eqnarray*}
X&=&\{\phi\;|\;\K_C\phi\in hd(w_1)\}\cup
\\
&&\{\neg\H_D\psi\;|\; \neg\psi\in hd(w_3) \wedge\forall a\in D((\mathbf{s}')_a=\psi)\}.
\end{eqnarray*}
First, we show that set $X$ is consistent. Indeed, set
$$
\{\neg\H_D\psi\;|\; \neg\psi\in hd(w_3) \wedge \forall a\in D((\mathbf{s}')_a=\psi)\}
$$ is equal to the union of the following two sets
\begin{eqnarray*}
&\hspace{-4mm}\{\neg\H_D\psi\;|\; \neg\psi\in hd(w_3)\wedge D\subseteq C \wedge \forall a\in D((\mathbf{s}')_a=\psi)\},\\
&\hspace{-4mm}\{\neg\H_D\psi\;|\; \neg\psi\in hd(w_3) \wedge D\nsubseteq C \wedge \forall a\in D((\mathbf{s}')_a=\psi)\}.
\end{eqnarray*}
The second set is a subset of $\{\neg\H_D\bot\,|\,D\subseteq\mathcal{A}\}$ by the choice of strategy $\mathbf{s}'$, see (\ref{s' def}).
Thus, set $X$ is a subset of 
\begin{eqnarray*}
&&\hspace{-7mm}\{\phi\;|\;\K_C\phi\in hd(w_1)\}\cup \{\neg\H_D\bot\,|\,D\subseteq\mathcal{A}\}\cup \\
&&\hspace{-7mm}\{\neg\H_D\psi\;|\; \neg\psi\in hd(w_3) \wedge D\subseteq C \wedge\forall a\in D((\mathbf{s}')_a=\psi) \}.
\end{eqnarray*}
Hence, by the Unachievability of Falsehood axiom, to show the consistency of set $X$ it suffices to prove the consistency of the union of the set $\{\phi\;|\;\K_C\phi\in hd(w_1)\}$
and the set
\begin{eqnarray*}
&&\hspace{-7mm}\{\neg\H_D\psi\;|\; \neg\psi\in hd(w_3) \wedge D\subseteq C \wedge\forall a\in D((\mathbf{s}')_a=\psi)\}.
\end{eqnarray*}
Suppose the opposite. In other words, assume that there are
\begin{eqnarray}
\hspace{-10mm}\mbox{formulae}&& \K_C\phi_1,\dots,\K_C\phi_n\in hd(w_1),\label{choice of phis}\\
\hspace{-10mm}\mbox{formulae}&&\neg\psi_1,\dots,\neg\psi_m\in hd(w_3),\label{choice of psis}\\
\hspace{-10mm}\mbox{and sets}&& D_1,\dots,D_m\subseteq C,\label{choice of Ds}\\
\hspace{-5mm}\mbox{such that}&&\forall i\le m\;\forall a\in D_i\;((\mathbf{s}')_a=\psi_i)\label{s' equation},\\
\hspace{-10mm}\mbox{and}&&\phi_1,\dots,\phi_n,\neg\H_{D_1}\psi_1,\dots,\neg\H_{D_m}\psi_m\vdash \bot.\nonumber
\end{eqnarray}
By Lemma~\ref{superdistributivity lemma} and the Truth axiom, 
$$
\K_C\phi_1,\dots,\K_C\phi_n,
\K_C\neg\H_{D_1}\psi_1,\dots,\K_C\neg\H_{D_m}\psi_m
\vdash \bot.
$$
Hence, statement~(\ref{choice of phis}) and the consistency of the set $hd(w_1)$ imply that  there exists $k\le m$ such that $\K_C\neg\H_{D_{k}}\psi_{k}\notin hd(w_1)$. Thus, $\neg\K_C\neg\H_{D_{k}}\psi_{k}\in hd(w_1)$ due to the maximality of the set $hd(w_1)$. 
Then, $\neg\K_{D_{k}}\neg\H_{D_{k}}\psi_{k}\in hd(w_1)$ by statement~(\ref{choice of Ds}) and the contrapositive of the Monotonicity axiom.
Then, $hd(w_1)\vdash \H_{D_{k}}\psi_{k}$ by the contrapositive of Lemma~\ref{strategic negative introspection lemma}. 
Thus, $hd(w_1)\vdash \H_{D_{k}}\K_C\psi_{k}$ by the Perfect Recall axiom, statement~(\ref{choice of Ds}), and the assumption of the lemma that $C\ne \varnothing$.  
Hence, $\H_{D_{k}}\K_C\psi_{k}\in hd(w_1)$
due to the maximality of the set $hd(w_1)$.
Note that statements~(\ref{s' def}) and (\ref{choice of Ds}) imply $\mathbf{s}=_{D_{k}}\mathbf{s}'$. Then, $(\mathbf{s})_a=\psi_{k}$ for each agent $a\in D_{k}$ by statement~(\ref{s' equation}). Thus, $\K_{C}\psi_{k}\in hd(w_2)$ by assumption $(w_1,\mathbf{s},w_2)\in M$, statement (\ref{choice of Ds}), and Definition~\ref{canonical M}.
Hence, $\psi_{k}\in hd(w_3)$ by the assumption $w_2\sim_C w_3$ of the lemma and Lemma~\ref{up-down lemma}. This contradicts statement~(\ref{choice of psis}) and the consistency of the set $hd(w_3)$. Therefore, set $X$ is consistent.

Let $\hat{X}$ be any maximal consistent extension of set $X$. Define $w_4$ to be the sequence $w_1::C::\hat{X}$. Note that $w_4\in W$ by Definition~\ref{canonical worlds} and the choice of set $X$. At the same time, $w_1\sim_C w_4$ by Definition~\ref{canonical sim} and Definition~\ref{sim set}. 

Finally, let us show that $(w_4,\mathbf{s}',w_3)\in M$ using Definition~\ref{canonical M}. Consider any $D\subseteq \mathcal{A}$ and any $\H_D\psi\in hd(w_4)=\hat{X}$ such that $\mathbf{s}'(a)=\psi$ for each $a\in D$. We need to show that $\psi\in hd(w_3)$. Suppose the opposite. Then, $\neg\psi\in hd(w_3)$ by the maximality of set $hd(w_3)$. Thus, $\neg\H_D\psi\in X$ by the choice of set $X$. Hence, $\neg\H_D\psi\in X\subseteq\hat{X}$. Therefore, $\H_D\psi\notin \hat{X}$ due to the consistency of set $\hat{X}$, which contradicts the choice of formula $\H_D\psi$.
\end{proof}
\begin{lemma}\label{K child empty}
For any history $h$, if $\K_\varnothing\phi\notin hd(hd(h))$, then there is a history $h'$ s.t. $h\approx_\varnothing h'$ and $\neg\phi\in hd(hd(h'))$.
\end{lemma}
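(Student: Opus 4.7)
The plan is to reduce the claim to the state-level version already proved in Lemma~\ref{state K child lemma} and then exploit the fact that the empty coalition cannot distinguish any two histories.

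First, I would apply maximality of the set $hd(hd(h))$ to the assumption $\K_\varnothing\phi\notin hd(hd(h))$, obtaining $\neg\K_\varnothing\phi\in hd(hd(h))$. Next, I would invoke Lemma~\ref{state K child lemma} with the epistemic state $w=hd(h)$ and coalition $C=\varnothing$, producing an epistemic state $w'\in W$ such that $hd(h)\sim_\varnothing w'$ and $\neg\phi\in hd(w')$.

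Now I need to package $w'$ into an actual history. The cleanest choice is the length-zero history $h'=(w')$, which is a legitimate history by Definition~\ref{history} (the $n=0$ case imposes no transition constraints). Then $hd(h')=w'$ and so $hd(hd(h'))=hd(w')$, which contains $\neg\phi$ by construction.

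Finally, I would verify $h\approx_\varnothing h'$. Here lies the only subtlety: histories $h$ and $h'$ need not have the same length, so Definition~\ref{approx histories} cannot be applied directly. However, Definition~\ref{approx set} demands only that $h\approx_a h'$ hold for every $a\in \varnothing$, and this universal quantification over the empty set is vacuously satisfied, as the paper already notes. So the relation $h\approx_\varnothing h'$ holds trivially, and we are done. There is no real obstacle; the main observation is simply that for the empty coalition the history-level indistinguishability collapses to a vacuous condition, which lets the state-level Lemma~\ref{state K child lemma} do all of the work.
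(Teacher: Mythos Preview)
Your proposal is correct and follows essentially the same approach as the paper: invoke Lemma~\ref{state K child lemma} at $hd(h)$ with $C=\varnothing$, take $h'$ to be the one-element history $(w')$, and use the vacuity of Definition~\ref{approx set} for the empty coalition to get $h\approx_\varnothing h'$. Your explicit appeal to maximality to obtain $\neg\K_\varnothing\phi\in hd(hd(h))$ before applying Lemma~\ref{state K child lemma} is a detail the paper leaves implicit.
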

\begin{proof}
By Lemma~\ref{state K child lemma}, there is a state $w\in W$ such that $hd(h)\sim_C w$ and $\neg\phi\in hd(w)$. Let $h'$ be a one-element sequence $w$. Note that $h\approx_\varnothing h'$ by Definition~\ref{approx set}. Finally, $\neg\phi\in hd(w)=hd(hd(h'))$.
\end{proof}
\begin{lemma}\label{K child}
For any nonempty coalition $C$ and any history $h$, if $\K_C\phi\notin hd(hd(h))$, then there is a history $h'$ such that $h\approx_C h'$ and $\neg\phi\in hd(hd(h'))$.
\end{lemma}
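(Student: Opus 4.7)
The plan is to prove Lemma~\ref{K child} by a routine application of Lemma~\ref{state K child lemma} at the tail, followed by a backward ``unravelling'' that matches every transition of $h$ one step at a time. The naive attempt --- induct directly on $|h|$ using only the statement of Lemma~\ref{K child} --- does not go through, because after peeling off one transition of $h$ we have lost the formula that drives the recursion. So I would first establish a strengthened ``path-building'' claim that does not mention the formula $\phi$ at all, and then derive Lemma~\ref{K child} from it in one line.

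\medskip

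\noindent\textbf{Strengthened claim.} For any nonempty coalition $C$, any history $h$, and any state $w\in W$ such that $hd(h)\sim_C w$, there exists a history $h'$ with $hd(h')=w$ and $h\approx_C h'$.

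\medskip

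Once this claim is available, the lemma is immediate: by Lemma~\ref{state K child lemma} applied to the state $hd(h)$ and the formula $\K_C\phi\notin hd(hd(h))$, there is some $w\in W$ with $hd(h)\sim_C w$ and $\neg\phi\in hd(w)$; the strengthened claim then yields a history $h'$ with $hd(h')=w$ and $h\approx_C h'$, and this $h'$ satisfies $\neg\phi\in hd(hd(h'))$.

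\medskip

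For the claim itself I would proceed by induction on $|h|$. In the base case $|h|=0$ the history $h$ is a single state $hd(h)$, and $h'=(w)$ works immediately by Definition~\ref{approx histories} and the hypothesis $hd(h)\sim_C w$. For the inductive step, write $h=h_1::\mathbf{s}::hd(h)$ with $(hd(h_1),\mathbf{s},hd(h))\in M$ and suppose $hd(h)\sim_C w$. Apply Lemma~\ref{square lemma} with $w_1=hd(h_1)$, $w_2=hd(h)$, $w_3=w$, and the profile $\mathbf{s}$; this uses the nonemptiness of $C$ and produces a state $w_4$ and a profile $\mathbf{s}'$ with $hd(h_1)\sim_C w_4$, $(w_4,\mathbf{s}',w)\in M$, and $\mathbf{s}=_C\mathbf{s}'$. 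The induction hypothesis, applied to $h_1$ and the state $w_4$, gives a history $h'_1$ with $hd(h'_1)=w_4$ and $h_1\approx_C h'_1$. Then $h':=h'_1::\mathbf{s}'::w$ is a history by Definition~\ref{history}, has $hd(h')=w$, and satisfies $h\approx_C h'$ by Definition~\ref{approx histories} (combining $h_1\approx_C h'_1$, $\mathbf{s}=_C\mathbf{s}'$, and $hd(h)\sim_C w$).

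\medskip

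The main obstacle, and the reason Lemma~\ref{K child} is harder than its state-level cousin Lemma~\ref{state K child lemma}, is precisely that a single-state witness for $\neg\phi$ does not come with a matching sequence of past transitions. Lemma~\ref{square lemma} is exactly the tool that lets us propagate the $\sim_C$-equivalence backwards one step while simultaneously adjusting the aggregated choice, and the strengthened claim is what packages this backward propagation into a form that survives induction. The empty-coalition case is deliberately excluded here because the square-completion argument of Lemma~\ref{square lemma} relies on $C\ne\varnothing$; that case is handled separately by Lemma~\ref{K child empty}, which uses a one-element history instead of a matched unravelling.
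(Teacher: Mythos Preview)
Your proof is correct, but the paper takes a more direct route that avoids your auxiliary ``strengthened claim.'' The key observation you missed is that if one peels from the \emph{front} of the history rather than the back, the direct induction on the statement of Lemma~\ref{K child} does go through: writing $h=(w_0,\mathbf{s}_1,w_1,\dots,\mathbf{s}_n,w_n)$ and setting $h_-=(w_1,\mathbf{s}_2,\dots,\mathbf{s}_n,w_n)$, we have $hd(hd(h_-))=hd(w_n)=hd(hd(h))$, so the hypothesis $\K_C\phi\notin hd(hd(h_-))$ is inherited for free. The induction hypothesis then yields $h'_-\approx_C h_-$ with $\neg\phi\in hd(hd(h'_-))$, and since $h_-\approx_C h'_-$ gives $w_1\sim_C w'_1$, a single application of Lemma~\ref{square lemma} (with $w_1\leftarrow w_0$, $w_2\leftarrow w_1$, $w_3\leftarrow w'_1$) prepends a matching initial segment $(w'_0,\mathbf{s}'_1)$. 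Your back-peeling approach does lose the hypothesis on $hd(hd(\cdot))$, which is why you were forced to abstract away $\phi$; but your remark that ``the naive attempt \ldots\ does not go through'' is only accurate for that particular direction of peeling. Both arguments invoke Lemma~\ref{square lemma} in exactly the same way; your version cleanly isolates the purely combinatorial history-lifting content (and is arguably more reusable), while the paper's front-peeling version is one layer shorter and needs no auxiliary statement.
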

\begin{proof}
Let $h=(w_0,\mathbf{s}_1,w_1,\dots, \mathbf{s_n},w_n)$. We prove the lemma by induction on integer $n$. 

\noindent{\em Base Case.} Let $n=0$. By Lemma~\ref{state K child lemma}, there is $w'_0\in W$ such that $w_0\sim_C w'_0$ and $\neg\phi\in hd(w_0')$. Let $h'$ be a one-element sequence $w_0'$. Note that $w_0\sim_C w'_0$ implies that $h\approx_C h'$ by Definition~\ref{approx histories}. Also, $\neg\phi\in hd(w'_0)=hd(hd(h'))$.
\begin{figure}[ht]
\begin{center}
\vspace{0mm}
\scalebox{0.6}{\includegraphics{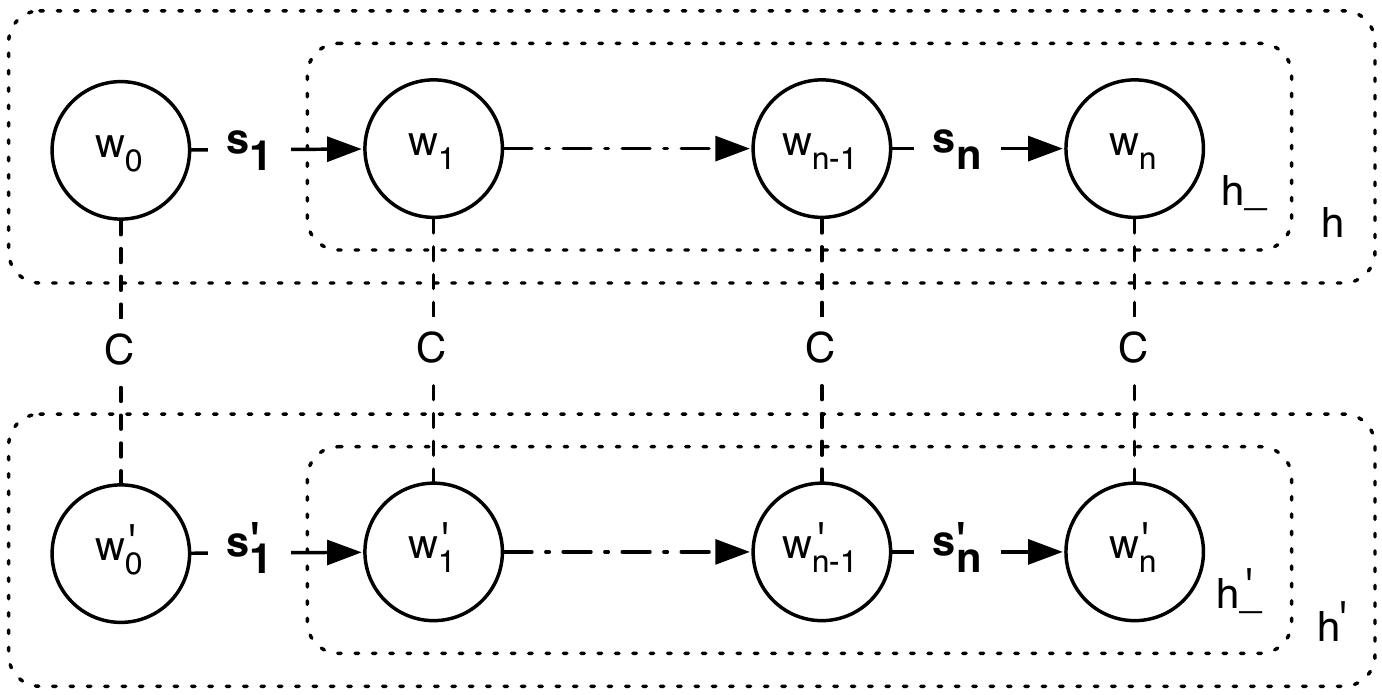}}
\caption{Histories $h$ and $h'$.}\label{making-history figure}
\vspace{0mm}
\end{center}
\end{figure}

\noindent{\em Induction Step.} Let $h_-=(w_1,\mathbf{s}_2,\dots, \mathbf{s_n},w_n)$, see Figure~\ref{making-history figure}. By Definition~\ref{history}, sequence $h_{-}$ is a history. By the induction hypothesis there is a history $h'_-$ such that $h_-\approx_C h'_-$ and $\neg\phi\in hd(hd(h'_-))$. Let $h'_-=(w'_1,\mathbf{s'_2},\dots, \mathbf{s'_n},w'_n)$. %
By Lemma~\ref{square lemma}, there is a state $w'_0$ and a complete strategy profile $\mathbf{s_1'}$ such that $w_0\sim_C w'_0$, $(w'_0,\mathbf{s_1'},w'_1)\in M$, and $\mathbf{s}_1=_C\mathbf{s_1'}$. 

Define $h'=(w'_0,\mathbf{s'_1},w'_1,\mathbf{s'_2},\dots, \mathbf{s'_n},w'_n)$. 
By Definition~\ref{history}, statement $(w'_0,\mathbf{s_1'},w'_1)\in M$ implies that $h'$ is a history. 
Note that $h\approx_C h'$ by Definition~\ref{approx histories} because $h'_-\approx_C h_-$, $w_0\sim_C w'_0$, and $\mathbf{s}_1=_C\mathbf{s_1'}$. 
Finally, $\neg\phi\in hd(hd(h'))$ because $hd(h')=hd(h'_-)$ and $\neg\phi\in hd(hd(h'_-))$.
\end{proof}

\subsubsection*{$\H$-child Lemmas}

Lemmas~\ref{state H child all lemma} and \ref{state H child lemma} are about the know-how modality $\H$. They are used later in the proof.

\begin{lemma}\label{state H child all lemma}
For any history $h$, if $\H_C\phi\in hd(hd(h))$, then there is a strategy profile $\mathbf{s}$ of coalition $C$ s.t.  for any history $h'::\mathbf{s}'::w'$ if $h\approx_C h'$ and $\mathbf{s}=_C\mathbf{s}'$, then $\phi\in hd(w')$.
\end{lemma}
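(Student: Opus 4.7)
The plan is to exhibit an explicit canonical strategy profile for coalition $C$ and then exploit the Strategic Positive Introspection axiom together with the definition of the canonical aggregation mechanism $M$.

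Concretely, I would define the strategy profile $\mathbf{s}\in\Phi^C$ by setting $(\mathbf{s})_a = \phi$ for every $a\in C$ (taking the empty tuple if $C=\varnothing$). The intuition is that in the canonical model each agent ``votes'' for the formula that the coalition collectively knows how to achieve, and the canonical mechanism is rigged (via Definition~\ref{canonical M}) so that unanimous votes for $\phi$ force $\phi$ to hold in the successor.

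Fix any history $h'::\mathbf{s}'::w'$ with $h\approx_C h'$ and $\mathbf{s}=_C\mathbf{s}'$. I would argue in four steps. First, from the assumption $\H_C\phi\in hd(hd(h))$ and the Strategic Positive Introspection axiom, plus maximality of $hd(hd(h))$, deduce $\K_C\H_C\phi\in hd(hd(h))$. Second, apply Lemma~\ref{K child all} to the hypothesis $h\approx_C h'$ to conclude $\H_C\phi\in hd(hd(h'))$. Third, observe that $(\mathbf{s}')_a = (\mathbf{s})_a = \phi$ for every $a\in C$ by $\mathbf{s}=_C\mathbf{s}'$ and the choice of $\mathbf{s}$. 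Fourth, since $h'::\mathbf{s}'::w'$ is a history, Definition~\ref{history} gives $(hd(h'),\mathbf{s}',w')\in M$; Definition~\ref{canonical M} applied with $D:=C$, together with $\H_C\phi\in hd(hd(h'))$ and $(\mathbf{s}')_a=\phi$ for all $a\in C$, yields $\phi\in hd(w')$, as desired.

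The step I expect to be the real workhorse is the transfer of $\H_C\phi$ along the indistinguishability $h\approx_C h'$, which is precisely what Strategic Positive Introspection plus Lemma~\ref{K child all} is designed to deliver; everything else is just unpacking definitions. The empty coalition case requires no separate argument, since the quantifier $\forall a\in D((\mathbf{s}')_a=\psi)$ in Definition~\ref{canonical M} is vacuously satisfied for $D=C=\varnothing$, and Lemma~\ref{K child all} still transports $\H_\varnothing\phi$ across the universal relation $\approx_\varnothing$.
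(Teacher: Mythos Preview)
Your proposal is correct and matches the paper's own proof almost exactly: the paper defines the same constant strategy profile $(\mathbf{s})_a=\phi$, uses Strategic Positive Introspection to get $\K_C\H_C\phi\in hd(hd(h))$, transfers $\H_C\phi$ to $hd(hd(h'))$, and then reads off $\phi\in hd(w')$ from Definition~\ref{canonical M}. The only cosmetic difference is that the paper does the transfer by unpacking $h\approx_C h'\Rightarrow hd(h)\sim_C hd(h')$ and invoking Lemma~\ref{up-down lemma} directly, whereas you cite the wrapper Lemma~\ref{K child all}, which is precisely that two-step argument.
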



\begin{proof}
Consider a strategy profile $\mathbf{s}$ of coalition $C$ such that $(\mathbf{s})_a=\phi$ for each agent $a\in C$. Suppose that $\H_C\phi\in hd(hd(h))$ and sequence $h'::\mathbf{s}'::w'$ is a history such that  $h\approx_C h'$ and $\mathbf{s}=_C\mathbf{s}'$. It suffices to show that $\phi\in hd(w')$.

By the Strategic Positive Introspection axiom, assumption $\H_C\phi\in hd(hd(h))$ implies $hd(hd(h))\vdash \K_C\H_C\phi$. Thus, $\K_C\H_C\phi \in hd(hd(h))$ by the maximality of set $hd(hd(h))$.

Assumption $h\approx_C h'$ implies $hd(h)\sim_C hd(h')$ by Definition~\ref{approx histories}. Thus, $\H_C\phi \in hd(hd(h'))$ by Lemma~\ref{up-down lemma} and because $\K_C\H_C\phi \in hd(hd(h))$. 


By Definition~\ref{history}, assumption that sequence $h'::\mathbf{s}'::w'$ is a history implies that $(hd(h'),\mathbf{s}',w')\in M$. Thus, $\phi\in hd(w')$ by Definition~\ref{canonical M} because $\H_C\phi \in hd(hd(h'))$ and  $(\mathbf{s}')_a=(\mathbf{s})_a=\phi$ for each agent $a\in C$. 
\end{proof}

\begin{lemma}\label{state H child lemma}
For any history $h$ and any strategy profile $\mathbf{s}$ of a coalition $C$, if $\neg\H_C\phi\in hd(hd(h))$, then there is a history $h::\mathbf{s}'::w'$ such that $\mathbf{s}=_C\mathbf{s}'$ and  $\neg\phi\in hd(w')$.
\end{lemma}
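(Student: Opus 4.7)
The plan is to mimic the construction used in Lemma~\ref{square lemma}: complete $\mathbf{s}$ to an $\mathcal{A}$-wide strategy profile $\mathbf{s}'$ by padding with $\bot$ outside $C$ (so that $\mathbf{s} =_C \mathbf{s}'$ is automatic), collect every formula that Definition~\ref{canonical M} would force into a would-be $M$-successor, throw in $\neg\phi$, take a maximal consistent extension $\hat Y$, and define $w' = hd(h)::\varnothing::\hat Y$. Padding with $\bot$ is the key stylistic trick: any coalition $D \not\subseteq C$ appearing as a witness would force the corresponding $\psi$ to be $\bot$, and then $\H_D\bot \in hd(hd(h))$ would contradict the Unachievability of Falsehood axiom together with consistency of $hd(hd(h))$. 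So we may restrict attention throughout to $D \subseteq C$ with $(\mathbf{s})_a = \psi$ on $D$.

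Writing $w = hd(h)$, let
\[
Y = \{\neg\phi\} \cup \{\psi \mid \exists D \subseteq \mathcal{A}:\ \H_D\psi \in hd(w)\, \wedge\, \forall a \in D:(\mathbf{s}')_a = \psi\}.
\]
The main obstacle is showing $Y$ is consistent. Suppose not; pick witnesses $\H_{D_1}\psi_1, \ldots, \H_{D_n}\psi_n \in hd(w)$ (with $D_i \subseteq C$ and $(\mathbf{s})_a = \psi_i$ on $D_i$) such that $\psi_1, \ldots, \psi_n \vdash \phi$. The difficulty is that the $D_i$ need not be pairwise disjoint, so Lemma~\ref{superdistributivity lemma} does not apply directly. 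To fix this, group by distinct formula $\psi_{j_1}, \ldots, \psi_{j_k}$ and enlarge each witness coalition to $D'_l = \{a \in C \mid (\mathbf{s})_a = \psi_{j_l}\}$. Since $\mathbf{s}$ is a function, the $D'_l$ are pairwise disjoint subsets of $C$, and Lemma~\ref{subset lemma H} together with maximality places each $\H_{D'_l}\psi_{j_l}$ in $hd(w)$. Lemma~\ref{superdistributivity lemma} now yields $hd(w) \vdash \H_{\bigcup_l D'_l}\phi$, and a final application of Lemma~\ref{subset lemma H} (using $\bigcup_l D'_l \subseteq C$) delivers $hd(w) \vdash \H_C\phi$, contradicting the assumption $\neg\H_C\phi \in hd(w)$.

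Given consistency, let $\hat Y$ be a maximal consistent extension of $Y$ and set $w' = w::\varnothing::\hat Y$. For $w' \in W$ in the sense of Definition~\ref{canonical worlds}, we need $\{\psi \mid \K_\varnothing\psi \in hd(w)\} \subseteq \hat Y$; the Empty Coalition axiom turns each such $\psi$ into $\H_\varnothing\psi \in hd(w)$, which is swallowed by the $D = \varnothing$ clause of $Y$ (the universal over $D$ being vacuous). The condition for $(w, \mathbf{s}', w') \in M$ in Definition~\ref{canonical M} is built directly into $Y$. Hence $h::\mathbf{s}'::w'$ is a history by Definition~\ref{history}, with $\mathbf{s} =_C \mathbf{s}'$ by construction and $\neg\phi \in Y \subseteq hd(w')$.
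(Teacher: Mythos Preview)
Your proof is correct and follows the same construction as the paper's (complete $\mathbf{s}$ to a full profile, build the set forced by Definition~\ref{canonical M} together with $\neg\phi$, show consistency, extend maximally, and set $w' = w::\varnothing::\hat Y$), differing only cosmetically: the paper pads with $\top$ rather than $\bot$, and it includes $\{\chi \mid \K_\varnothing\chi \in hd(w)\}$ explicitly in the target set rather than recovering it afterwards via the Empty Coalition axiom as you do. Your explicit disjointness fix---grouping by distinct $\psi$ and enlarging each witness to $D'_l = \{a \in C \mid (\mathbf{s})_a = \psi_{j_l}\}$---is in fact more careful than the paper's direct invocation of Lemma~\ref{superdistributivity lemma}, which silently assumes the $D_i$ can be taken pairwise disjoint.
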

\begin{proof}
Let $w=hd(h)$. Consider a complete strategy profile $\mathbf{s}'$ and a set of formulae $X$ such that
\begin{equation}\label{first s' def}
(\mathbf{s}')_a=
\begin{cases}
(\mathbf{s})_a, & \mbox{ if } a\in C,\\
\top, & \mbox{ otherwise,}
\end{cases}
\end{equation}
\begin{eqnarray*}
\mbox{and }&&X=\{\neg\phi\}\cup\{\chi\;|\; \K_\varnothing\chi\in hd(w)\} \cup\\
&&\hspace{7mm}\{\psi\;|\;H_D\psi\in hd(w) \wedge \forall a\in D((\mathbf{s}')_a=\psi)\}.
\end{eqnarray*}
First, we show that set $X$ is consistent. Indeed, note that set
$
\{\psi\;|\; \H_D\psi\in hd(w) \wedge \forall a\in D((\mathbf{s}')_a=\psi)\}
$ is equal to the union of the following two sets
\begin{eqnarray*}
&&\{\psi\;|\; \H_D\psi\in hd(w)\wedge D\subseteq C \wedge \forall a\in D((\mathbf{s}')_a=\psi)\},\\
&&\{\psi\;|\; \H_D\psi\in hd(w) \wedge D\nsubseteq C \wedge \forall a\in D((\mathbf{s}')_a=\psi)\}.
\end{eqnarray*}
The second of the two sets is a subset of $\{\top\}$ by the choice (\ref{first s' def}) of strategy profile $\mathbf{s}'$.
Thus, set $X$ is a subset of
\begin{eqnarray*}
&& \{\top\}\cup\{\neg\phi\}\cup \{\chi\;|\; \K_\varnothing\chi\in hd(w)\} \cup\\
&&\{\psi\;|\; \H_D\psi\in hd(w) \wedge D\subseteq C \wedge \forall a\in D((\mathbf{s}')_a=\psi)\}.
\end{eqnarray*}
Hence, to show the consistency of set $X$, it suffices to prove the consistency of the union of 
$\{\neg\phi\}\cup \{\chi\;|\; \K_\varnothing\chi\in hd(w)\}$
and
$\{\psi\;|\; \H_D\psi\in hd(w) \wedge D\subseteq C \wedge \forall a\in D((\mathbf{s}')_a=\psi)\}$.
Suppose the opposite. In other words, assume there are
\begin{eqnarray}
\mbox{formulae} && \K_\varnothing\chi_1,\dots,\K_\varnothing\chi_n\in hd(w)\label{first choice of chis}\\
\hspace{-20mm}\mbox{and formulae} && \H_{D_1}\psi_1,\dots,\H_{D_m}\psi_m\in hd(w)\label{first choice of psis}\\
\mbox{such that} && D_1,\dots,D_m\subseteq C,\label{first choice of Ds}\\
\mbox{} &&\forall i\le m\;\forall a\in D_i\;((\mathbf{s}')_a=\psi_i),\label{first s' equation}
\end{eqnarray}
and $\chi_1,\dots,\chi_n,\psi_1,\dots,\psi_m\vdash \phi$. By Lemma~\ref{superdistributivity lemma},
$$
\H_\varnothing\chi_1,\dots,\H_\varnothing\chi_n,\H_{D_1}\psi_1,\dots,\H_{D_m}\psi_m\vdash\H_{\cup_{i=1}^m D_i}\phi.
$$
Then, by the Empty Coalition axiom, 
$$
\K_\varnothing\chi_1,\dots,\K_\varnothing\chi_n,\H_{D_1}\psi_1,\dots,\H_{D_m}\psi_m\vdash\H_{\cup_{i=1}^m D_i}\phi.
$$
Thus, by Lemma~\ref{subset lemma H} and assumption~(\ref{first choice of Ds}), 
$$
\K_\varnothing\chi_1,\dots,\K_\varnothing\chi_n,\H_{D_1}\psi_1,\dots,\H_{D_m}\psi_m\vdash\H_{C}\phi.
$$
Hence, $hd(w)\vdash\H_{C}\phi$ by assumption~(\ref{first choice of chis}) and assumption~(\ref{first choice of psis}). Then, $\neg\H_{C}\phi\notin hd(w)$ due to the consistency of the set $hd(w)$, which contradicts the assumption $\neg\H_{C}\phi\in hd(w)$ of the lemma. Therefore, set $X$ is consistent.

Let set $\hat{X}$ be a maximal consistent extension of set $X$ and $w'$ be the sequence $w::\varnothing::\hat{X}$. Note that $w'\in W$ by Definition~\ref{canonical worlds} and because
$\{\chi\;|\; \K_\varnothing\chi\in hd(w)\}\subseteq X\subseteq \hat{X}=hd(w')$ 
by the choice of set $X$. 

Also note that $(w,\mathbf{s}', w')\in M$ by Definition~\ref{canonical M} and because $\{\psi\;|\;\H_D\psi\in hd(w) \wedge \forall a\in D((\mathbf{s}')_a=\psi)\}
\subseteq X\subseteq \hat{X}=hd(w')$
by the choice of set $X$. Thus, $h::\mathbf{s}'::w'$ is a history by Definition~\ref{history}.

Finally, $\neg\phi\in hd(w')$ because $\neg\phi\in X\subseteq \hat{X}=hd(w')$ by the choice of set $X$.
\end{proof}
\begin{lemma}\label{canonical is regular}
The system $ETS(X_0)$ is regular. 
\end{lemma}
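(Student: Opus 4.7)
The plan is to directly construct, for an arbitrary state $w \in W$ and an arbitrary complete strategy profile $\mathbf{s} \in \Phi^{\mathcal{A}}$, a successor state $w' \in W$ satisfying $(w, \mathbf{s}, w') \in M$. The construction closely parallels the one used inside the proof of Lemma~\ref{state H child lemma}, with the simplification that no $\neg\phi$ conjunct needs to be included in the candidate set.

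Specifically, I would define
$$X = \{\chi \,|\, \K_\varnothing\chi \in hd(w)\} \cup \{\psi \,|\, \H_D\psi \in hd(w) \wedge \forall a\in D\,((\mathbf{s})_a = \psi)\},$$
let $\hat{X}$ be any maximal consistent extension of $X$, and set $w' = w :: \varnothing :: \hat{X}$. Membership $w' \in W$ is then immediate from Definition~\ref{canonical worlds}, because the first part of $X$ is exactly what condition 3 requires when the newly appended coalition is $\varnothing$. The relation $(w, \mathbf{s}, w') \in M$ follows from Definition~\ref{canonical M}, because the second part of $X$ is precisely the set of formulae that must lie inside $hd(w')$.

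The only nontrivial step is showing that $X$ is consistent. Suppose for contradiction that it is not. Then one can extract formulae $\K_\varnothing\chi_1, \dots, \K_\varnothing\chi_n \in hd(w)$ and $\H_{D_1}\psi_1, \dots, \H_{D_m}\psi_m \in hd(w)$, with $(\mathbf{s})_a = \psi_i$ for every $a \in D_i$, such that $\chi_1, \dots, \chi_n, \psi_1, \dots, \psi_m \vdash \bot$. Converting each $\K_\varnothing\chi_i$ into $\H_\varnothing\chi_i$ via the Empty Coalition axiom and then applying Lemma~\ref{superdistributivity lemma} yields $\H_\varnothing\chi_1,\dots,\H_\varnothing\chi_n,\H_{D_1}\psi_1,\dots,\H_{D_m}\psi_m \vdash \H_{\bigcup_i D_i}\bot$, whence $hd(w) \vdash \H_{\bigcup_i D_i}\bot$, contradicting the Unachievability of Falsehood axiom together with the consistency of $hd(w)$.

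I do not anticipate a real obstacle here: the consistency argument is a direct adaptation of the one already carried out inside the proof of Lemma~\ref{state H child lemma}, with $\bot$ in place of $\phi$ as the target of derivation, so everything should go through at once.
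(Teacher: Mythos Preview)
Your proposal is correct. The approach differs from the paper's in one respect worth noting: the paper does not redo the construction but instead invokes Lemma~\ref{state H child lemma} directly. Since $\neg\H_{\mathcal{A}}\bot\in hd(w)$ by the Unachievability of Falsehood axiom and maximality, Lemma~\ref{state H child lemma} (with $C=\mathcal{A}$ and $\phi=\bot$) yields a history $h::\mathbf{s}'::w'$ with $\mathbf{s}=_{\mathcal{A}}\mathbf{s}'$; then $\mathbf{s}=\mathbf{s}'$ and $(w,\mathbf{s},w')\in M$. So the paper treats regularity as an immediate corollary of the $\H$-child lemma, while you unpack that lemma's construction in place, omitting the superfluous $\neg\phi$ conjunct. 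Your route is self-contained and arguably cleaner for this particular statement; the paper's route is shorter and emphasizes that regularity is already latent in Lemma~\ref{state H child lemma}. One small remark: when you invoke Lemma~\ref{superdistributivity lemma}(2) you implicitly need the coalitions $D_1,\dots,D_m$ to be pairwise disjoint. This is harmless---after removing duplicate $\psi_i$'s, the condition $(\mathbf{s})_a=\psi_i$ for all $a\in D_i$ forces $D_i\cap D_j=\varnothing$ whenever $\psi_i\neq\psi_j$---but it is worth making explicit, and the same tacit step occurs in the paper's own proof of Lemma~\ref{state H child lemma}.
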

\begin{proof}
 Let $w\in W$ and $\mathbf{s}\in V^\mathcal{A}$. By Definition~\ref{regular}, it suffices to show that there is an epistemic state $w'$ such that $(w,\mathbf{s},w')\in M$. Indeed, let history $h$ be a single-element sequence $w$. Note that $\neg\H_\mathcal{A}\bot\in hd(w)=hd(hd(h))$ by the Unachievability of Falsehood axiom and due to the maximality of set $hd(w)$. Thus, by Lemma~\ref{state H child lemma}, there is a history $h::\mathbf{s}'::w'$ such that $\mathbf{s}=_\mathcal{A}\mathbf{s'}$. Hence, $(hd(h),\mathbf{s}',w')\in M$ by Definition~\ref{history}. 
 At the same time, $\mathbf{s}=_\mathcal{A}\mathbf{s'}$ implies that $\mathbf{s}=\mathbf{s'}$ by Definition~\ref{s eq set}. Thus, $(hd(h),\mathbf{s},w')\in M$. Therefore, $(w,\mathbf{s},w')\in M$ because $hd(h)=w$.
\end{proof}

\subsubsection*{Completeness: Final Steps}

\begin{lemma}\label{main induction}
$h\Vdash\phi$ iff $\phi\in hd(hd(h))$ for each history $h$ and each formula $\phi\in\Phi$.
\end{lemma}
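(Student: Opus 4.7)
The plan is to prove this truth lemma by induction on the structure of the formula $\phi$, with the base case and Boolean cases being routine, while the two modal cases are handled by invoking the ``child'' lemmas already established.

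For the base case $\phi = p$ a propositional variable, I would unfold Definition~\ref{sat} to get $h\Vdash p$ iff $hd(h)\in\pi(p)$, and then apply Definition~\ref{canonical pi} to conclude this is equivalent to $p\in hd(hd(h))$. The Boolean cases $\neg\psi$ and $\psi\to\chi$ follow immediately from the induction hypothesis together with the maximality and consistency of the set $hd(hd(h))$.

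For the knowledge case $\phi = \K_C\psi$, the forward direction uses Lemma~\ref{K child all}: if $\K_C\psi\in hd(hd(h))$, then for every $h'$ with $h\approx_C h'$ we get $\psi\in hd(hd(h'))$, hence $h'\Vdash\psi$ by the induction hypothesis, so $h\Vdash\K_C\psi$ by Definition~\ref{sat}. The backward (contrapositive) direction splits on whether $C$ is empty: if $\K_C\psi\notin hd(hd(h))$ and $C=\varnothing$, I would apply Lemma~\ref{K child empty}, and if $C\ne\varnothing$, apply Lemma~\ref{K child}, to obtain a history $h'$ with $h\approx_C h'$ and $\neg\psi\in hd(hd(h'))$. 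Consistency of $hd(hd(h'))$ gives $\psi\notin hd(hd(h'))$, so by the induction hypothesis $h'\nVdash\psi$, and hence $h\nVdash\K_C\psi$.

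For the know-how case $\phi = \H_C\psi$, the forward direction uses Lemma~\ref{state H child all lemma} to pick a witnessing strategy profile $\mathbf{s}$ guaranteeing $\psi\in hd(w')$ for every extension $h'::\mathbf{s}'::w'$ with $h\approx_C h'$ and $\mathbf{s}=_C\mathbf{s}'$; then the induction hypothesis turns this membership into satisfaction, yielding $h\Vdash\H_C\psi$. For the backward direction, assume $\H_C\psi\notin hd(hd(h))$, so $\neg\H_C\psi\in hd(hd(h))$ by maximality. Given any candidate strategy profile $\mathbf{s}$ of $C$, Lemma~\ref{state H child lemma} produces an extension $h::\mathbf{s}'::w'$ with $\mathbf{s}=_C\mathbf{s}'$ and $\neg\psi\in hd(w')=hd(hd(h::\mathbf{s}'::w'))$; using $h\approx_C h$ from Lemma~\ref{approx eq rel lemma} and the induction hypothesis, this history falsifies $\psi$, defeating the chosen $\mathbf{s}$. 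Since $\mathbf{s}$ was arbitrary, Definition~\ref{sat} gives $h\nVdash\H_C\psi$.

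The main obstacle is really already absorbed into the $\K_C$ case, and specifically into Lemma~\ref{K child} that it relies on: under perfect recall the natural ``state-level'' construction of a $\K_C$-alternative is insufficient, because the indistinguishable state must arise as the endpoint of an entire indistinguishable history of the same length, built transition-by-transition using Lemma~\ref{square lemma}. Assuming those child lemmas, the truth lemma itself is a clean structural induction, with the only subtlety being the separate handling of $C=\varnothing$ versus $C\ne\varnothing$ in the epistemic case.
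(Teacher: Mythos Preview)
Your proposal is correct and follows essentially the same route as the paper's own proof: a structural induction with the atomic and Boolean cases handled routinely, the $\K_C$ case split on $C=\varnothing$ versus $C\neq\varnothing$ using Lemmas~\ref{K child all}, \ref{K child empty}, and \ref{K child}, and the $\H_C$ case handled via Lemmas~\ref{state H child all lemma} and \ref{state H child lemma}. The only cosmetic difference is that what you call the ``forward'' direction the paper labels $(\Leftarrow)$ (and vice versa), and in the $\H_C$ non-membership case the paper argues by contradiction from an assumed witnessing $\mathbf{s}$ while you quantify over all $\mathbf{s}$ directly---but these are the same argument.
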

\begin{proof}
We prove the statement by induction on the structural complexity of formula $\phi$. If $\phi$ is an atomic proposition $p$, then $h\Vdash p$ iff $hd(h)\in \pi(p)$, by Definition~\ref{sat}. Hence, $h\Vdash p$ iff $p\in hd(hd(h))$ by Definition~\ref{canonical pi}.

The cases when formula $\phi$ is a negation or an implication follow from Definition~\ref{sat} and the maximality and the consistency of the set $hd(hd(h))$ in the standard way.

Next, suppose that formula $\phi$ has the form $\K_C\psi$.

\noindent$(\Rightarrow):$ Let  $\K_C\psi\notin hd(hd(h))$. Then, either by Lemma~\ref{K child empty} (when set $C$ is empty) or by Lemma~\ref{K child} (when set $C$ is nonempty), there is a history $h'$ such that $h\approx_C h'$ and $\neg\psi\in hd(hd(h'))$. Hence, $h'\nVdash\psi$ by the induction hypothesis. 
Therefore, $h\nVdash\K_C\psi$ by Definition~\ref{sat}.

\noindent$(\Leftarrow):$ Let  $h\nVdash\K_C\psi$. By Definition~\ref{sat}, there is a history $h'$ such that $h\approx_C h'$ and $h'\nVdash\psi$. Thus, $\psi\notin hd(hd(h'))$ by the induction hypothesis. Then, $K_C\phi\notin hd(hd(h))$ by Lemma~\ref{K child all}. 

Finally, let formula $\phi$ be of the form $\H_C\psi$.

\noindent$(\Rightarrow):$ Assume $h\Vdash\H_C\psi$. Then, by Definition~\ref{sat}, there is a strategy profile $\mathbf s\in \Phi^C$ such that for any history $h'::\mathbf{s}'::w'$, if $h\approx_C h'$ and $\mathbf{s}=_C\mathbf{s}'$, then $h'::\mathbf{s}'::w'\Vdash\psi$. Thus, by Lemma~\ref{approx eq rel lemma}, 
\begin{equation}\label{our minipage}
    \begin{minipage}{0.35\textwidth}
    for any history $h::\mathbf{s}'::w'$, if  $\mathbf{s}=_C\mathbf{s}'$, then $h::\mathbf{s}'::w'\Vdash\psi$.
    \end{minipage}
\end{equation}

Suppose that $\H_C\psi\notin hd(hd(h))$. Then, $\neg\H_C\psi\in hd(hd(h))$ due to the maximality of the set $hd(hd(h))$. Hence,  by Lemma~\ref{state H child lemma}, there is a history $h::\mathbf{s}'::w'$ such that $\mathbf{s}=_C\mathbf{s}'$ and $\neg\psi\in hd(w')$. Thus, $\psi\notin hd(w')$ due to the consistency of set $hd(w')$. Hence, by the induction hypothesis, $h::\mathbf{s}'::w'\nVdash\psi$, which contradicts statement~(\ref{our minipage}). 

\noindent$(\Leftarrow):$ Assume that $\H_C\psi\in hd(hd(h))$. By Lemma~\ref{state H child all lemma}, there is a strategy profile $\mathbf{s}\in\Phi^C$ such that for any history $h'::\mathbf{s}'::w'$ if $h\approx_C h'$ and $\mathbf{s}=_C\mathbf{s}'$, then $\psi\in hd(w')$. Hence, by the induction hypothesis, for any history $h'::\mathbf{s}'::w'$ if $h\approx_C h'$ and $\mathbf{s}=_C\mathbf{s}'$, then $h'::\mathbf{s}'::w'\Vdash \psi$. Therefore, $h\Vdash\H_C\psi$ by Definition~\ref{sat}.
\end{proof}
\begin{theorem}\label{completeness theorem}
If $h\Vdash\phi$ for each history $h$ of each regular epistemic transition system, then $\vdash\phi$.
\end{theorem}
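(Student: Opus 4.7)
The plan is to prove the contrapositive: assume $\nvdash\phi$ and construct a history in some regular epistemic transition system at which $\phi$ fails. First I would observe that if $\nvdash\phi$ then the singleton $\{\neg\phi\}$ is consistent, so by a standard Lindenbaum-style argument it extends to a maximal consistent set $X_0\subseteq\Phi$ containing $\neg\phi$.

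Next I would use this $X_0$ to instantiate the canonical model $ETS(X_0)=(W,\{\sim_a\}_{a\in\mathcal{A}},\Phi,M,\pi)$ built in the preceding section. By Lemma~\ref{canonical is regular}, this epistemic transition system is regular, so any history in it is a legitimate witness under the hypothesis of the theorem. Take the one-element sequence $h=(X_0)$, where $X_0$ is regarded as an epistemic state: by Definition~\ref{canonical worlds} with $n=0$, $X_0\in W$, and by Definition~\ref{history} the sequence $h=(X_0)$ is indeed a history. Observe that $hd(h)=X_0$ (as an epistemic state) and $hd(hd(h))=X_0$ (as a maximal consistent set of formulae).

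Now since $\neg\phi\in X_0 = hd(hd(h))$, the consistency of $X_0$ gives $\phi\notin hd(hd(h))$. Applying the Truth Lemma (Lemma~\ref{main induction}) to the formula $\phi$, we conclude $h\nVdash\phi$. This contradicts the hypothesis that $\phi$ holds at every history of every regular epistemic transition system, so $\vdash\phi$ as required.

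There is no real obstacle at this stage: all the heavy lifting has been done in the construction of $ETS(X_0)$, the regularity Lemma~\ref{canonical is regular}, and especially the induction on formula complexity in Lemma~\ref{main induction}, whose delicate $\H_C$ and $\K_C$ cases relied on the $\K$-child lemmas (Lemmas~\ref{state K child lemma}--\ref{K child}) and $\H$-child lemmas (Lemmas~\ref{state H child all lemma}--\ref{state H child lemma}). The only subtlety worth flagging explicitly is that one must exhibit a concrete history rather than just an epistemic state, but the single-element history $(X_0)$ suffices precisely because $hd(hd(\cdot))$ reaches down into the formula-set layer of the canonical state.
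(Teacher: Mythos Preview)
Your proof is correct and follows essentially the same approach as the paper: prove the contrapositive by extending $\{\neg\phi\}$ to a maximal consistent set $X_0$, form the canonical regular system $ETS(X_0)$, take the single-element history $(X_0)$, and invoke the Truth Lemma (Lemma~\ref{main induction}). The only cosmetic difference is that the paper applies Lemma~\ref{main induction} to $\neg\phi$ and then uses Definition~\ref{sat} to pass to $h\nVdash\phi$, whereas you use consistency of $X_0$ to get $\phi\notin hd(hd(h))$ and apply the Truth Lemma to $\phi$ directly; both are fine.
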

\begin{proof}
Suppose that $\nvdash\phi$. Consider any maximal consistent set $X_0$ such that $\neg\phi\in X_0$. Let $h_0$ be a single-element sequence consisting of just set $X_0$. By Definition~\ref{history}, sequence $h_0$ is a history of the canonical epistemic transition system $ETS(X_0)$. 
Then, $h_0\Vdash \neg\phi$ by Lemma~\ref{main induction}. Therefore, $h_0\nVdash \phi$ by Definition~\ref{sat}.
\end{proof}
Theorem~\ref{completeness theorem} can be generalized to the strong completeness theorem in the standard way. We also believe that the number of states and the domain of choices in the canonical model can be made finite using filtration on subformulae.

\section{Conclusion}

We have extended the recent study of the interplay between knowledge and strategic coalition power~\cite{aa16jlc,fhlw17ijcai,nt17tark,nt17aamas} to the case of perfect recall. Our main results are the soundness and the completeness theorems. 

\bibliographystyle{aaai}

\end{document}